\documentclass[11pt]{article}
\usepackage{fullpage}
\usepackage[hyphens]{url}
\usepackage{hyperref}
\hypersetup{
breaklinks=true,
    colorlinks=true, 
    linkcolor=black, 
    citecolor=black, 
    filecolor=black,
    urlcolor=black 
}
\usepackage[utf8]{inputenc} 
\usepackage[T1]{fontenc}    
\usepackage{hyperref}       
\usepackage{url}            
\usepackage{booktabs}       
\usepackage{amsfonts}       
\usepackage{nicefrac}       
\usepackage{microtype}      
\usepackage{xcolor}         
\usepackage{pifont}
\usepackage{makecell}

\usepackage{float}

\usepackage{graphicx} 
\usepackage{caption}
\usepackage{wrapfig}

\usepackage{mathrsfs}
\usepackage{amsmath}
\usepackage{amsthm}
\usepackage{amssymb}
\usepackage{tablefootnote}
\usepackage{multirow}
\usepackage{enumerate}
\usepackage{color}
\usepackage{xcolor}

\usepackage[numbers]{natbib}


\usepackage{algorithm}
\usepackage{algpseudocode}
\allowdisplaybreaks[4]
\usepackage{bm,todonotes}
\allowdisplaybreaks

\newtheorem{thm}{Theorem}
\newtheorem{lem}{Lemma}[section]

\newtheorem{defn}{Definition}

\newcounter{subassumption}[asu]

\makeatletter
\renewcommand{\p@subassumption}{\theasu}
\makeatother

\newtheoremstyle{remarkstyle}
  {}                    
  {}                    
  {\normalfont}         
  {}                    
  {\itshape}            
  {.}                   
  { }                   
  {}                    
\theoremstyle{remarkstyle}



\usepackage{amsmath,amsfonts,bm}


















\def\1{\bm{1}}










\DeclareMathAlphabet{\mathsfit}{\encodingdefault}{\sfdefault}{m}{sl}
\SetMathAlphabet{\mathsfit}{bold}{\encodingdefault}{\sfdefault}{bx}{n}








\renewcommand{\xi}{\zeta}

\def\0{{\bf 0}}
\def\1{{\bf 1}}

\def\PB{{\mathbb P}}

\def\KnowSum{{\textsc{KnowSum}}}



\title{
Evaluating the Unseen Capabilities: How Many Theorems Do LLMs Know?}
  
\author{
{Xiang Li\thanks{University of Pennsylvania; Email: \texttt{lx10077@upenn.edu}. } } 
\and
{Jiayi Xin\thanks{University of Pennsylvania; Email: \texttt{jiayixin@seas.upenn.edu}. }} 
\and
{Qi Long\thanks{University of Pennsylvania; Email: \texttt{qlong@upenn.edu}. }}
\and
{Weijie J. Su\thanks{University of Pennsylvania; Email: \texttt{suw@wharton.upenn.edu}. }}
}

\date{June 1, 2025}

\begin{document}

\maketitle

\begin{abstract}

Accurate evaluation of large language models (LLMs) is crucial for understanding their capabilities and guiding their development. However, current evaluations often inconsistently reflect the actual capacities of these models. In this paper, we demonstrate that one of many contributing factors to this \textit{evaluation crisis} is the oversight of unseen knowledge---information encoded by LLMs but not directly observed or not yet observed during evaluations. We introduce \KnowSum, a statistical framework designed to provide a more comprehensive assessment by quantifying the unseen knowledge for a class of evaluation tasks. \KnowSum~estimates the unobserved portion by extrapolating from the appearance frequencies of observed knowledge instances. We demonstrate the effectiveness and utility of \KnowSum~across three critical applications: estimating total knowledge, evaluating information retrieval effectiveness, and measuring output diversity. Our experiments reveal that a substantial volume of knowledge is omitted when relying solely on observed LLM performance. Importantly, \KnowSum~yields significantly different comparative rankings for several common LLMs based on their internal knowledge. 

\end{abstract}


\section{Introduction}\label{sec:intro}

Large language models (LLMs) have emerged as a transformative technology in artificial intelligence, demonstrating remarkable capabilities across diverse domains \citep{bommasani2021opportunities,achiam2023gpt}. Originally developed for natural language tasks such as translation, summarization, and dialogue generation, LLMs now exhibit increasingly sophisticated forms of reasoning and problem-solving \citep{brown2020language,wei2022chain,bubeck2023sparks}. They can contribute to clinical and biomedical decision-making~\citep{singhal2023large}, tackle mathematical problems with verifier assistance~\citep{cobbe2021training}, advance research in chemistry and physics~\citep{taylor2022galactica}, and synthesize knowledge across multiple scientific disciplines~\citep{bubeck2023sparks}. As the scope of their application continues to expand, LLMs are poised to become integral tools in scientific research, significantly influencing how knowledge is generated and utilized.

This remarkable progress has been facilitated, in large part, by LLM evaluation \citep{donoho2024data,chiang2024chatbot,chang2024survey}. LLM evaluation refers to the process of assessing and understanding model effectiveness through benchmark datasets and standardized protocols, which inform practitioners and guide model development. While evaluation has been fundamental to machine learning and AI progress since its early days, it has become increasingly critical nowadays as AI research has evolved into a predominantly empirical science, with advancement driven largely by experimentation and iterative refinement~\citep{sutton2019bitter,silver2025welcome}. Consequently, comprehensive evaluation of LLM capabilities is essential for identifying promising research directions and optimizing resource allocation, particularly given the substantial computational and energy requirements of AI development \citep{raji2021ai,bowman2024eight}.

Nevertheless, the field currently confronts a crisis of evaluation \citep{khomenko2025too, singh2025leaderboard}, characterized by a growing disconnect between benchmark performance and genuine model generalization capacity. Multiple factors contribute to this crisis, including benchmark contamination~\citep{sainz2023nlp}, overfitting through repeated leaderboard submissions~\citep{singh2025leaderboard}, and narrow test-time optimization strategies~\citep{saroufim2025neurips,cohen2025forget,zhang2024careful}, all of which artificially inflate performance metrics. However, a fundamental statistical limitation underlying this evaluation crisis remains largely unaddressed. Current evaluation methodologies typically consider only observable outputs from benchmark datasets. Due to the stochastic nature of generation, LLMs may possess substantial knowledge that remains unexpressed under constrained query budgets~\citep{wu2025estimating}. 
For instance, when attempting to assess an LLM's knowledge of 
mathematical theorems by directly prompting it to enumerate such conditions, the model would predominantly list common theorems, while rare theorems---despite being within the model's knowledge repertoire---might not surface even after hundreds of queries (see Figure~\ref{fig:fig2-motivation}).\footnote{Direct queries such as ``how many theorems do you know?'' is equally unreliable, as such responses are often inconsistent or exaggerated~\citep{kale2025line}.}

The aim of this paper is to quantify the extent of latent knowledge possessed by LLMs beyond observable outputs and to develop robust statistical methodologies for this purpose. While estimating the unseen may initially appear intractable, we demonstrate that by conceptualizing LLM outputs as a sampling process---analogous to drawing samples from Poisson or multinomial distributions---we can apply statistically principled techniques to infer unexpressed knowledge from observed knowledge counts. This challenge aligns with classical statistical problems such as estimating unseen species in ecology~\citep{good1953population} and inferring the number of unrecorded words in linguistics~\citep{efron1976estimating}.

\begin{figure}[!t]
\centering
\includegraphics[width=1\columnwidth]{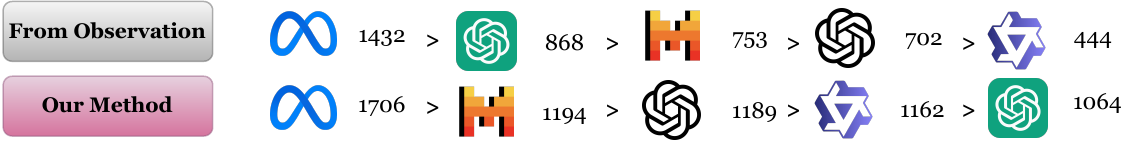}
\caption{
Estimating the number of unseen theorems changes LLM rankings based on the observed theorems alone. 
The evaluated models are: 
\raisebox{-0.2\ht\strutbox}{\includegraphics[height=0.35cm]{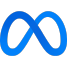}}~\texttt{LLaMA-V3-70B-instruct}~ \citep{grattafiori2024llama}, 
\raisebox{-0.2\ht\strutbox}{\includegraphics[height=0.35cm]{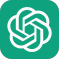}}~\texttt{ChatGPT-3.5-turbo-chat}~\citep{openai2023chatgpt35}, 
\raisebox{-0.2\ht\strutbox}{\includegraphics[height=0.35cm]{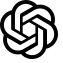}}~\texttt{ChatGPT-4o-chat}~\citep{hurst2024gpt},  
\raisebox{-0.2\ht\strutbox}{\includegraphics[height=0.35cm]{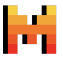}}~\texttt{Mistral-7B-instruct-V0.1}~\citep{jiang2023mistral7b}, and 
\raisebox{-0.2\ht\strutbox}{\includegraphics[height=0.35cm]{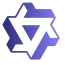}}~\texttt{Qwen2.5-7B-instruct}~\citep{yang2024qwen2}.
}
\label{fig:fig1-rank}
\end{figure}

\begin{figure}[!t]
\vspace{-10pt}
\centering
\includegraphics[width=1\columnwidth]{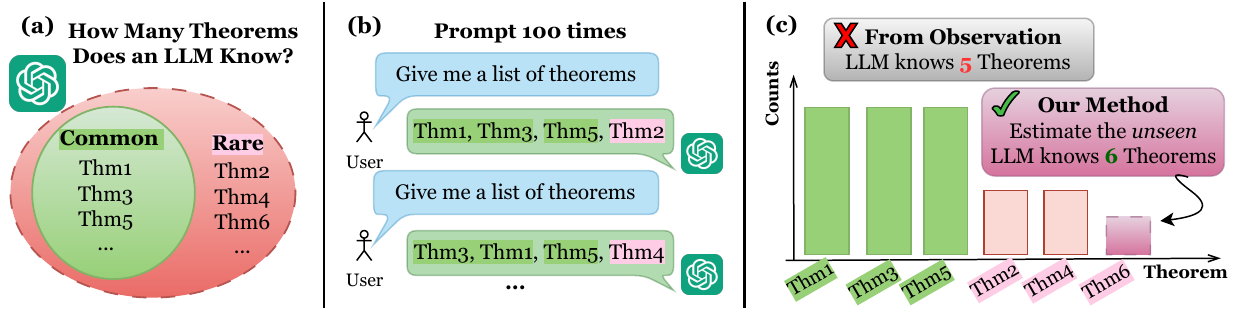}
\vspace{-20pt}
\caption{
\textbf{(a)} Examples of common and rare theorems from an LLM. 
\textbf{(b)} Obtain theorem names by repeatedly prompting the LLM. 
\textbf{(c)} Comparison between observed results and our proposed approach.
}
\label{fig:fig2-motivation}
\vspace{-10pt}
\end{figure}

To address this challenge, we introduce \KnowSum, a general-purpose statistical framework specifically designed to estimate unseen knowledge in LLMs. Rather than relying solely on observed outputs, \KnowSum~quantifies the internal knowledge of LLMs by extrapolating observed frequency distributions and employing the smoothed Good--Turing estimator \citep{orlitsky2016optimal}. A particularly attractive feature of \KnowSum~is that it incurs no additional computational cost. Moreover, the framework is statistically grounded, conceptually simple, and applicable to any evaluation task that can be formulated as knowledge counting. We demonstrate \KnowSum's versatility through three distinct applications: (1) estimating LLM knowledge of human diseases and mathematical theorems , (2) quantifying retrieval coverage in biomedical document retrieval and question answering, and (3) measuring semantic diversity in open-ended generation tasks. Our findings reveal two key insights: (i) LLMs typically express only 20-50\% of their estimated internal knowledge in our experiments, and (ii) accounting for unseen knowledge can markedly alter model rankings (see Figure~\ref{fig:fig1-rank}). By shifting focus from ``what the model outputs'' to ``what the model could potentially output,'' our work provides a novel perspective on model evaluation---one that more accurately reflects intrinsic capacity rather than merely observed performance.

\section{Related Work}
\label{sec:related-work}

\paragraph{LLM evaluation.}
A common approach to evaluating LLM capabilities is to pose a fixed set of questions and assess the model's responses based on correctness, much like a human exam \citep{chang2024survey,chiang2024chatbot}. 
Numerous benchmark datasets have been proposed for this purpose, spanning domains such as mathematics \citep{cobbe2021training}, healthcare \citep{singhal2023large}, science \citep{hendrycks2021measuring,cui2025curie}, and reasoning \citep{phan2025humanity}. However, these evaluations are limited to observable outputs on predefined questions and fail to capture knowledge that the model may possess but does not express. Our work complements this by estimating what a model could say but has not yet---which broadens the scope of standard evaluations.

In information retrieval, evaluation typically relies on ranking-based metrics (e.g., precision, recall, MRR \citep{Craswell2009}) or generation-based metrics (e.g., BLEU \citep{papineni2002bleu}, ROUGE \citep{lin2004rouge}). 
These approaches rely on query sets collected in advance, which may not fully capture the range of inputs a model can handle---thereby overlooking its potential to retrieve relevant information for unseen or underrepresented queries.
In contrast, our method estimates how much additional information a model could retrieve if more queries were drawn i.i.d. from the same distribution as the existing dataset, offering a broader view of retrieval coverage.
For diversity measurement, traditional metrics often rely on heuristics like perplexity \citep{Holtzman2020The} or type-token ratios \citep{templin1957certain}. Recent work \cite{wang2024can} applies hypothesis testing to assess whether LLMs exhibit human-like creativity. 
In contrast, we estimate how many semantically distinct outputs a model could generate for open-ended prompts, providing a scalable and interpretable measure of diversity beyond what is directly observed.

\vspace{-0.8em}
\paragraph{Estimating the unseen.}
Estimating the number of unseen from the observed is a fundamental problem arising across many domains \citep{efron1976estimating, chao2002estimating}. Despite differences in context, these problems often share a common structure: inferring properties of an underlying distribution from limited samples, where rare observations provide important information about what remains unobserved. 
 A classic and widely used approach is the Good--Turing estimator \citep{good1953population}, originally developed to estimate the number of unseen species, and later extended in contexts such as vocabulary diversity in computational linguistics \citep{efron1976estimating,gale1995good} and species richness in ecology \citep{chao2002estimating,chao2005new}. 
 In the context of LLMs, it has been used to quantify unseen facts in training data \citep{kalai2024calibrated} and to estimate memorized but unobserved facts in model outputs \citep{nasr2025scalable}.
 Although the Good--Turing estimator is theoretically grounded \citep{orlitsky2003always,orlitsky2015competitive,favaro2023near} and provides unbiased estimates, it suffers from extremely high variance. To address this, smoothed variants have been proposed \citep{efron1976estimating,orlitsky2016optimal} that better balance bias and variance, resulting in more stable and more accurate estimates. Therefore, we adopt such a smoothed estimator as a component of our pipeline.

\paragraph{Memorization, knowledge retrieval, and knowledge inference.}
Pre-trained language models are known to store a vast amount of knowledge \citep{petroni2019language,brown2020language,wang2020language}, and extensive research has investigated this capability. In many contexts, this knowledge resembles memorization---specifically, the ability to reproduce subtexts from the training dataset. Several works have focused on quantifying such memorization \citep{carlini2023quantifying,zhou2024quantifying,nasr2025scalable,shin2020autoprompt}, typically by crafting prompts that elicit memorized subtexts. 
Beyond memorization, several studies have examined how and when LLMs acquire factual knowledge during pretraining \citep{li2022pre,allen2023physics,chang2024large,prato2025large}, typically using external datasets as ground truth to score correctness. In contrast, we do not rely on predefined answers for grading or comparison. Instead, our focus is on estimating internal knowledge, that is, information encoded in the model that has not surfaced in sampled outputs.
Additionally, there is emerging work that infers properties of an LLM's internal knowledge. For example, \cite{gottesman2024estimating} predicts how knowledgeable a model is about a specific entity based on its internal activations before any output is produced. \cite{wu2025estimating} estimates the probability of a binary output property using importance sampling, particularly when the event is too rare to be observed directly.
While these methods similarly move beyond surface-level outputs, we focus on a different task---estimating the amount of internal knowledge.

\section{Method}
\label{sec:method}

\subsection{\KnowSum: Unseen Knowledge Estimation Pipeline}

\paragraph{Pipeline description.}
Algorithm~\ref{alg:detection} outlines our method which follows a five-step pipeline: generation, verification, clustering, prevalence estimation, and unseen estimation (see Figure~\ref{fig:fig2-flowchart}).
It begins by querying the model $n$ times to generate candidate knowledge items. A verifier is then used to filter out invalid or hallucinated responses using external sources such as search engines or domain-specific databases.
From each valid response, we extract the core knowledge using a decoder $C$, which typically removes stylistic variation to retain the essential content, for example, the name of a theorem. When a single response contains multiple answers, this step also splits them into individual items.
The resulting knowledge items are then clustered based on semantic similarity, with each cluster representing a distinct piece of knowledge. For example, ``Theorem of Pythagoras'' and ``Pythagorean theorem'' would be treated as equivalent.
We then construct a histogram $\{{n}_s\}_{s \ge 1}$ of item frequencies and apply the smoothed Good--Turing (SGT) estimator to predict the number of unseen items. Denoting by $\widehat{N}_{\mathrm{unseen}}(t)$ the number of items that do not appear in the observed $n$ queries but would appear in the subsequent $t \cdot n$ queries, the final estimate of total knowledge is given by $\widehat{N}_{\mathrm{tot}} = N_{\mathrm{seen}} + \widehat{N}_{\mathrm{unseen}}(t)$, where $N_{\mathrm{seen}} = \sum_{s \ge 1} {n}_s$ is the number of observed knowledge. 

\paragraph{Modularity.}
\KnowSum~is highly modular: once verification and clustering are specified for a given task, the rest of the pipeline applies out of the box. These two steps are the most task-dependent, as they reflect how knowledge is defined and validated in each domain. With them in place, the remaining steps remain unchanged, making the framework broadly applicable to any domain with discrete, countable knowledge. We detail the implementation choices in Section~\ref{sec:experiment}.

\begin{figure}[t]
\centering
\includegraphics[width=1\columnwidth]{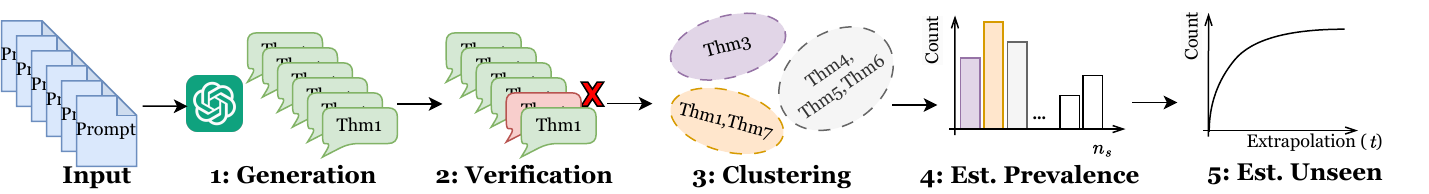}
\caption{Schematic illustration of the unseen knowledge estimation framework.}
\label{fig:fig2-flowchart}
\vspace{-2mm}
\end{figure}

\begin{algorithm}[t]
\caption{Knowledge Summation (\KnowSum)}
\label{alg:detection}
\begin{algorithmic}[1]
\Statex \textbf{Input:} Verifier $A$, knowledge decoder $C$, number of queries $n$, extrapolation factor $t$.

\State \textbf{Generation:} Independently query the target LLM $n$ times to produce responses $x_1, \ldots, x_n$.

\State \textbf{Verification:} Use the verifier $A$ to filter out invalid or hallucinated responses.

\State \textbf{Clustering:} Apply the decoder $C$ to extract the core knowledge $c_i = C(x_i)$ from each valid response, and group $\{c_i\}_{i=1}^n$ into $m$ semantically distinct clusters $\{k_j\}_{j=1}^m$. 

\State \textbf{Prevalence Estimation:} Construct the frequency histogram $\{{n}_s\}_{s \ge 1}$, where ${n}_s$ denotes the number of clusters that appear exactly $s$ times. Estimate the observed knowledge by $N_{\mathrm{seen}} = \sum_{s \ge 1} {n}_s$.

\State \textbf{Unseen Estimation:} For given $t$, use the smoothed Good--Turing estimator $\widehat{N}_{\mathrm{unseen}}(t)$ to compute the number of unseen knowledge items in $tn$ additional queries.

\Statex \textbf{Output:} Estimated total knowledge:  
$\widehat{N}_{\mathrm{tot}} := N_{\mathrm{seen}} + \widehat{N}_{\mathrm{unseen}}(t)$. 
\end{algorithmic}
\end{algorithm}

\subsection{Smoothed Good--Turing Estimator}

To quantify the amount of knowledge that remains unexpressed by an LLM, we adopt the smoothed Good--Turing estimator $\widehat{N}_{\mathrm{unseen}}(t)$ \citep{orlitsky2016optimal}. This estimator predicts the number of new knowledge items that would appear in the next $t \cdot n$ queries, based on the empirical prevalence counts $\{n_s\}_{s \ge 1}$ collected from the first $n$ queries, where $n_s$ denotes the number of items that appear exactly $s$ times.
The SGT estimator combines the first $k$ of these counts into a weighted sum:
\begin{equation}
\label{eq:SGT}
\widehat{N}_{\mathrm{unseen}}(t):= \sum_{s=1}^{k}  h_s\cdot {n}_s,
\quad \text{where} \quad
h_s= -(-t)^{s} \PB\left(\mathrm{Bin}\left(k, \frac{1}{t+1}\right) \geq s\right),
\end{equation}
with $k$ denoting a user-specified truncation parameter, and $\mathrm{Bin}(k, 1/(t+1))$ representing a binomial distribution with $k$ trials and success probability $1/(t+1)$. Initially known as the Efron–Thisted estimator \citep{efron1976estimating}, this estimator is provably near-optimal in terms of worst-case normalized mean squared error, up to subpolynomial factors \citep{orlitsky2016optimal}. 

The effectiveness of this estimator lies in a fundamental statistical insight: if each knowledge item emerges according to its own Poisson process, then rare items---those observed only once or twice---serve as indicators of a much larger set of unseen items with similarly low occurrence rates. Although the truly unseen items never appear in the data, their presence is inferred through these ``near misses''---items that almost went unobserved. These rare prevalence counts act as statistical fingerprints of the hidden mass, enabling reasonable extrapolation of how many new items would surface with continued querying. 

While the most direct signals come from singletons $n_1$ and doubletons $n_2$, incorporating the first $k$ prevalence counts $\{n_s\}_{s=1}^k$  allows the estimator to more robustly capture the contribution of low-frequency items and thus improves stability when extrapolating further into the unseen. The following result shows how $k$ controls the growth of the estimate $\widehat{N}_{\mathrm{unseen}}(t)$ relative to the number of previously seen items $N_{\mathrm{seen}}$.
For practical use, we select the optimal value of $k$ from a candidate set using a cross-validation procedure (see Section~\ref{sec:validation} for details).

\begin{thm}
\label{thm:relation}
If $t \ge k-1$, then the following holds :
\[
\widehat{N}_{\mathrm{unseen}}(t) \le  e^{\frac{kt}{t+1}} \cdot N_{\mathrm{seen}}.
\]
\end{thm}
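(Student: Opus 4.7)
The plan is to decouple the alternating signs of the SGT weights $h_s$ from the nonnegative prevalence counts $n_s$, bound each magnitude $|h_s|$ via a direct binomial-tail estimate, and then extract the exponential envelope using $1+y \le e^y$.

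First I reduce the problem to a uniform bound on the weights. Since $n_s \ge 0$ for every $s$, the triangle inequality gives
\[
\widehat{N}_{\mathrm{unseen}}(t) \;=\; \sum_{s=1}^{k} h_s\, n_s \;\le\; \sum_{s=1}^{k} |h_s|\, n_s \;\le\; \Bigl(\max_{1 \le s \le k}|h_s|\Bigr)\, N_{\mathrm{seen}},
\]
so it suffices to show that $\max_{1\le s\le k}|h_s| \le e^{kt/(t+1)}$.

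Next I control each $|h_s| = t^s\, \PB(\mathrm{Bin}(k, 1/(t+1)) \ge s)$ by a union bound. Writing $p = 1/(t+1)$ and $X \sim \mathrm{Bin}(k,p)$, the event $\{X \ge s\}$ is contained in the union over $s$-subsets of indices of the event that every Bernoulli in that subset succeeds, each of probability $p^s$, so
\[
|h_s| \;\le\; t^s \binom{k}{s} p^s \;=\; \binom{k}{s}\left(\tfrac{t}{t+1}\right)^s.
\]
Because every term on the right is a single nonnegative summand of the binomial expansion of $\bigl(1 + t/(t+1)\bigr)^k$, applying $1+y \le e^y$ with $y = t/(t+1)$ yields
\[
\max_{1\le s\le k}|h_s| \;\le\; \left(1 + \tfrac{t}{t+1}\right)^k \;\le\; e^{kt/(t+1)},
\]
which combined with the first step completes the argument.

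The main obstacle, modest as it is, is deciding how aggressively to handle the sign-handling in the first step: because the weights $h_s$ alternate in sign while the counts $n_s$ are uncoordinated with that pattern, the triangle inequality loses nothing essential, and hunting for cancellation would not improve the dimension-dependent envelope. I note that the argument above does not explicitly invoke the hypothesis $t \ge k-1$; this condition is equivalent to $kp \le 1$, i.e.\ $\EB[X] \le 1$, which is the small-mean regime where the union bound is tight and where $e^{kt/(t+1)}$ is meaningfully smaller than $e^k$. It is plausible that the authors invoke this hypothesis to sharpen the constant via a mean-based bound on the binomial tail, or simply to align the statement with the practical operating regime of the estimator.
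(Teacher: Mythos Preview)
Your argument is correct and in fact slightly stronger than the paper's. Both proofs begin identically by reducing to the uniform coefficient bound $\max_{1\le s\le k}|h_s| \le e^{kt/(t+1)}$. The paper then invokes a multiplicative Chernoff bound (its Lemma~\ref{lem:tail}) to obtain $\PB(X\ge s)\le \bigl(ek/((t+1)s)\bigr)^s e^{-k/(t+1)}$, multiplies by $t^s$, drops the factor $t/(t+1)<1$, and maximizes $(ek/s)^s$ over $s$ to reach $e^k\cdot e^{-k/(t+1)}=e^{kt/(t+1)}$. The Chernoff step is an upper-tail bound valid only when $s\ge\mu=k/(t+1)$; enforcing this for every $s\ge 1$ is exactly the hypothesis $t\ge k-1$, which answers your closing question. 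Your route via the union bound $\PB(X\ge s)\le \binom{k}{s}p^s$ and the single-term-of-binomial-expansion trick $\binom{k}{s}(t/(t+1))^s\le (1+t/(t+1))^k\le e^{kt/(t+1)}$ is more elementary, avoids the auxiliary lemma, and establishes the same inequality for all $t>0$, so the hypothesis $t\ge k-1$ is not actually needed for the stated conclusion.
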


This inequality also implies that the total predicted knowledge is at most a multiplicative factor of the observed knowledge. Motivated by this relationship, we introduce the seen knowledge ratio (SKR), which normalizes the estimate to yield a dimensionless quantity that reflects the proportion of exposed knowledge:

\begin{defn}[Seen Knowledge Ratio]
\label{def:SKR}
The seen knowledge ratio for a fixed $t$ is defined as
\[
\mathrm{SKR}(t) = \frac{N_{\mathrm{seen}}}{N_{\mathrm{seen}} + \widehat{N}_{\mathrm{unseen}}(t)}.
\]
\end{defn}
When both $t$ and $n$ are sufficiently large (e.g., $t = 10^2$, $n = 10^5$ in our experiments), the denominator $N_{\mathrm{seen}} + \widehat{N}_{\mathrm{unseen}}(t)$ serves as an estimate of the total knowledge encoded in the model. The SKR thus offers an interpretable summary of knowledge exposure: a value close to 1 indicates that most of the model's accessible knowledge has already surfaced, while a lower value suggests that much remains hidden. Moreover, since $\widehat{N}_{\mathrm{unseen}}(t)$ is non-decreasing in $t$, SKR naturally decreases as we consider further extrapolation into the model's unexpressed knowledge (see Figure~\ref{fig:effect-t-prompt}).

\section{Applications}
\label{sec:experiment}

In this section, we apply our \textsc{KnowSum} pipeline to three applications and evaluate the knowledge capabilities of several mainstream LLMs. At a high level, we focus on instruction-tuned or commercial models, as they are generally better at following prompts and adhering to task guidelines.
Specifically, we evaluate nine popular LLMs, listed in the first column of Table~\ref{tab:knowledge-counting}.
Due to space limits, additional experimental details are provided in Appendix~\ref{appen:experiment}.

\subsection{Application 1: Knowledge Estimation}

\paragraph{Setup.}
Our first and most basic application is to estimate the total amount of knowledge an LLM possesses. We focus on two domains: mathematical theorems and human diseases. The pipeline begins by querying the target LLM $N_{\mathrm{query}}$ times with a fixed prompt, each time requesting $N_{\mathrm{ans}}$ instances of domain-specific knowledge.
To validate the outputs, we use external databases and cluster the responses based on their external identifiers. Specifically, for mathematical theorems, we consult {Wikipedia}, {MathSciNet}, and {ProofWiki}, treating two theorems as identical if they are verified by the same webpage. For human diseases, we match each generated name against the {Human Disease Ontology} \citep{schriml2022human} and merge those that map to the same Disease Ontology Identifier (DOID).
We set the sampling temperature to $1$ to encourage diverse responses. To further promote diversity, we use $(N_{\mathrm{query}}, N_{\mathrm{ans}}) = (30{,}000, 20)$ for theorems and $(3{,}000, 50)$ for diseases.
Throughout all experiments, we fix the extrapolation factor at $t = 100$ with the number of total observations $n = N_{\mathrm{query}} \cdot N_{\mathrm{ans}}$. We selected the best truncation level $k \in \{6, 8, 10\}$ by cross-validation.
An sensitivity study on the effects of the prompt, $N_{\mathrm{query}}$, and $N_{\mathrm{ans}}$ is presented in Section~\ref{sec:analysis}.

\begin{table}[th]
\caption{Results of knowledge estimation (Application 1) across different LLMs. }
\label{tab:knowledge-counting}
\centering
\resizebox{\linewidth}{!}{%
\begin{tabular}{l|ccc|ccc||ccc|ccc}
\toprule
\multirow{2}{*}{\textbf{Model}} 
& \multicolumn{3}{c|}{\textbf{Theorem only}}
& \multicolumn{3}{c||}{\textbf{All math concepts}}
& \multicolumn{3}{c|}{\textbf{Anatomical disease}}
& \multicolumn{3}{c}{\textbf{Human diseases}} \\
& $N_{\mathrm{seen}}$ & $\widehat{N}_{\mathrm{tot}}$ & $\mathrm{SKR}$
& $N_{\mathrm{seen}}$ & $\widehat{N}_{\mathrm{tot}}$ & $\mathrm{SKR}$
& $N_{\mathrm{seen}}$ & $\widehat{N}_{\mathrm{tot}}$ & $\mathrm{SKR}$
& $N_{\mathrm{seen}}$ & $\widehat{N}_{\mathrm{tot}}$ & $\mathrm{SKR}$ \\
\midrule 
\ding{172} \texttt{ChatGPT-4o-chat} & 702 & 1189 & 0.59  & 974 & 2410 & 0.40 &  277 & 732 &  0.38 & 589  &  1096 & 0.54 \\
\ding{173} \texttt{ChatGPT-3.5-turbo-chat} &868 & 1064 & 0.82 & 1266 & 1703 & 0.74 & 268 & 278 & 0.96 & 523 & 706 & 0.74 \\
\ding{174} \texttt{LLaMA-V3-70B-instruct} & \textbf{1432} & \textbf{1706}& 0.84 & \textbf{2289} &\textbf{2645} & 0.87 & \textbf{875} & \textbf{3372}& 0.26 &  \textbf{1777} & \textbf{7564} & \textbf{0.23} \\
\ding{175} \texttt{LLaMA-V3-3B-instruct} &1035 & 1331 & 0.78 & 1717 & 2640 & 0.65 & 780 & 1375 & 0.57 & 1374 & 3002 & 0.46 \\
\ding{176} \texttt{Mistral-7B-instruct-V0.1} & 753 & 1194 & 0.63 & 1313 & 2481 & 0.53 & 489 & 1723 & 0.28 & 859 & 1276 & 0.67 \\
\ding{177} \texttt{Qwen2.5-7B-instruct} & 444 & 1162 & 0.38 & 663 & 1385 & 0.48 & 426 & 521 & 0.82 & 763 & 763 & 1.00\\
\ding{178} \texttt{Claude-3.7-Sonnet} &  120 & 201 & 0.60 & 147 & 293 & 0.50 & 115 & 462 & \textbf{0.25} & 213 & 686 & 0.31  \\
\ding{179} \texttt{DeepSeek-V3} &148 & 241 & 0.61 & 162 & 203 & 0.80 & 86 & 334 & 0.26 & 193 & 752 & 0.26 \\
\ding{180} \texttt{Gemini-1.5-flash} & 100 & 515 & \textbf{0.19} & 122 & 478 & \textbf{0.26} & 139 & 143 & 0.97 & 298 & 306 & 0.97\\
\bottomrule
\end{tabular}}
\vspace{-1em}
\end{table}

\paragraph{Two verification criteria.}
Our analysis estimates an LLM's total knowledge across two domains: human diseases and mathematical theorems. Table~\ref{tab:knowledge-counting} presents the counting results under two distinct validation criteria for each domain.
For \textbf{theorem counting}, the first two columns of Table~\ref{tab:knowledge-counting} detail our findings. We apply two validation criteria. The first, more restrictive setting, considers a response valid only if its name explicitly includes ``theorem''---even if the item is on Wikipedia, we mark it invalid without that specific word. The second, broader criterion accepts a response if it contains any of twelve related terms: ``theorem,'' ``lemma,'' ``law,'' ``principle,'' ``formula,'' ``criterion,'' ``identity,'' ``conjecture,'' ``rule,'' ``equation,'' ``postulate,'' or ``corollary.''  The term ``theorem'' itself accounts for roughly 10\% (2,558 entries) of the 24,762 identified mathematical concepts. 
The last two columns of Table~\ref{tab:knowledge-counting} show results for \textbf{human diseases}. Similarly,  we validate LLM-generated disease names using two criteria. Our initial, narrower criterion accepts responses only if they fall into the ``anatomical disease'' category (diseases affecting specific body parts), including recognized synonyms. Anatomical diseases make up about 51\% (6,036 entries) of the 11,872 human diseases in the Disease Ontology. Our second, broader criterion considers a response valid if it matches any human disease within the Disease Ontology, encompassing all its subcategories.
The distribution of these subcategories is provided in Figure~\ref{fig:subcategory_distribution} in the appendix.

\paragraph{A gap between observed and total knowledge.}
We observe that most {SKR} values are strictly less than 1, indicating that all evaluated LLMs possess unrevealed internal knowledge.
For theorem counting, \texttt{LLaMA-V3-70B-instruct} achieves the highest estimated total knowledge under both criteria. In the disease domain, it also reports the largest $\widehat{N}_{\mathrm{tot}}$ and the lowest SKR, suggesting broader coverage in biomedical knowledge.
\texttt{Gemini-1.5-flash} also exhibits a low SKR, likely due to its limited number of observed outputs, which implies a greater portion of its knowledge remains hidden.

\paragraph{Impact of unseen knowledge on model ranking.}
A notable finding is that accounting for unseen knowledge can meaningfully affect model comparison.  Take the ``theorem only'' setting as an example: based on the observed count $N_{\mathrm{seen}}$, \texttt{ChatGPT-3.5-turbo-chat} appears to outperform \texttt{ChatGPT-4o-chat}. However, when including unseen knowledge, \texttt{ChatGPT-4o-chat} is estimated to know more. This aligns with expectations, as the latter is trained on a larger corpus \citep{achiam2023gpt}.  A similar shift occurs under the broader validation criterion, where both \( N_{\mathrm{seen}} \) and \( \widehat{N}_{\mathrm{tot}} \) increase, but rankings still change.
Moreover, we observe the same reversal in the disease domain: although \texttt{DeepSeek-V3} and \texttt{Gemini-1.5-flash} appear comparable in observed counts, accounting for unseen knowledge shows that \texttt{DeepSeek-V3} has greater estimated coverage.

\begin{figure}[!t]
\centering
\includegraphics[width=\columnwidth]{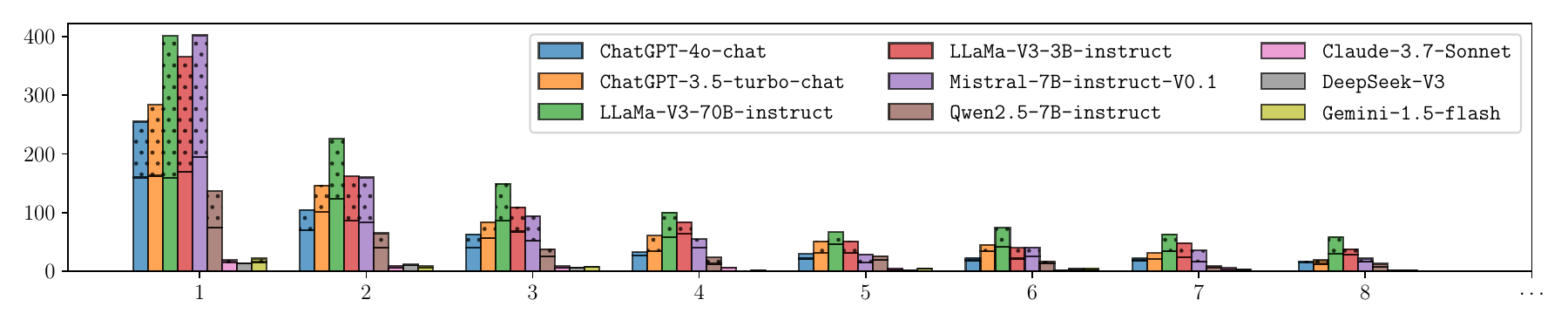}
\vspace{-0.6em}
\includegraphics[width=\columnwidth]{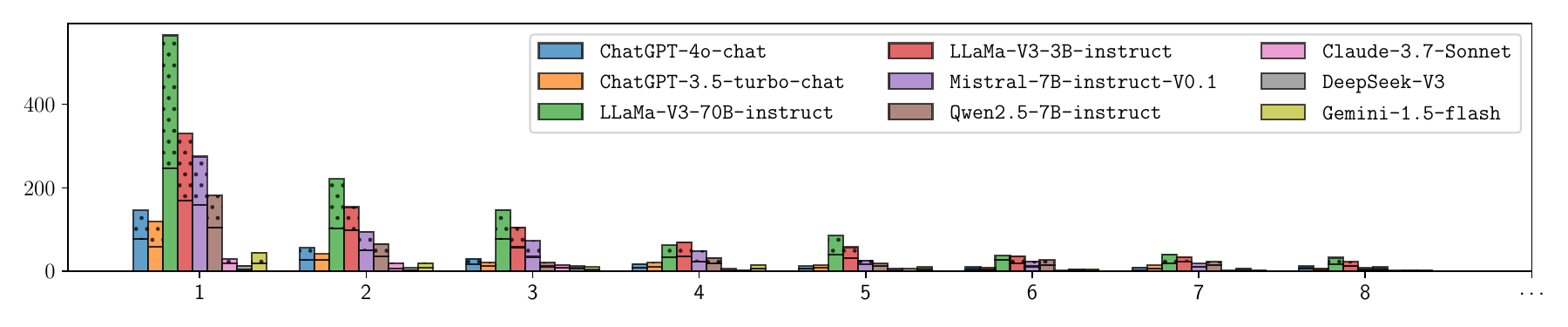}
\vspace{-2em}
\caption{Frequency histogram of theorems (top) and human diseases (bottom) found in LLM responses. The $x$-axis shows how many times a knowledge appears, and the $y$-axis shows how many distinct knowledges occur with that frequency. 
The shaded region shows additional counts due to relaxed validation criteria.
}
\label{fig:frequency}
\vspace{-1em}
\end{figure}

\paragraph{Why unseen knowledge alters model ranking.}  
A natural question is why accounting for unseen knowledge can change the ranking of LLMs. One might assume that a higher observed count \( N_{\mathrm{seen}} \) should imply a higher estimated total knowledge \( \widehat{N}_{\mathrm{tot}} \), but this is misleading---\( \widehat{N}_{\mathrm{tot}} \) is not determined by \( N_{\mathrm{seen}} \) alone.
Formally, \( \widehat{N}_{\mathrm{tot}} = \sum_{s \ge 1} n_s + \widehat{N}_{\mathrm{unseen}}(t) \), where \( n_s \) is the number of knowledge items seen exactly \( s \) times. While \( N_{\mathrm{seen}}=\sum_{s \ge 1} n_s  \) only counts distinct items, the SGT estimator depends on the prevalence distribution \( \{n_s\}_{s \ge 1} \), with coefficients that alternate in sign. Thus, even with the same \( N_{\mathrm{seen}} \), differences in prevalence (that is, how frequently items appear) can lead to different unseen estimates.
Figure~\ref{fig:frequency} illustrates this: all models show a decay in \( n_s \), but their scales and shapes differ. Notably, for theorems, \texttt{ChatGPT-3.5-turbo-chat} has uniformly higher \( n_s \) values than \texttt{ChatGPT-4o-chat}, yet receives a lower total knowledge estimate. This is because the sharper decay in the latter's prevalence suggests that it generates more rare or distinct items--leading to a higher unseen estimate and thus greater overall knowledge, despite having fewer observed items.

\begin{figure}[!t]
\centering
\includegraphics[width=0.495\textwidth]{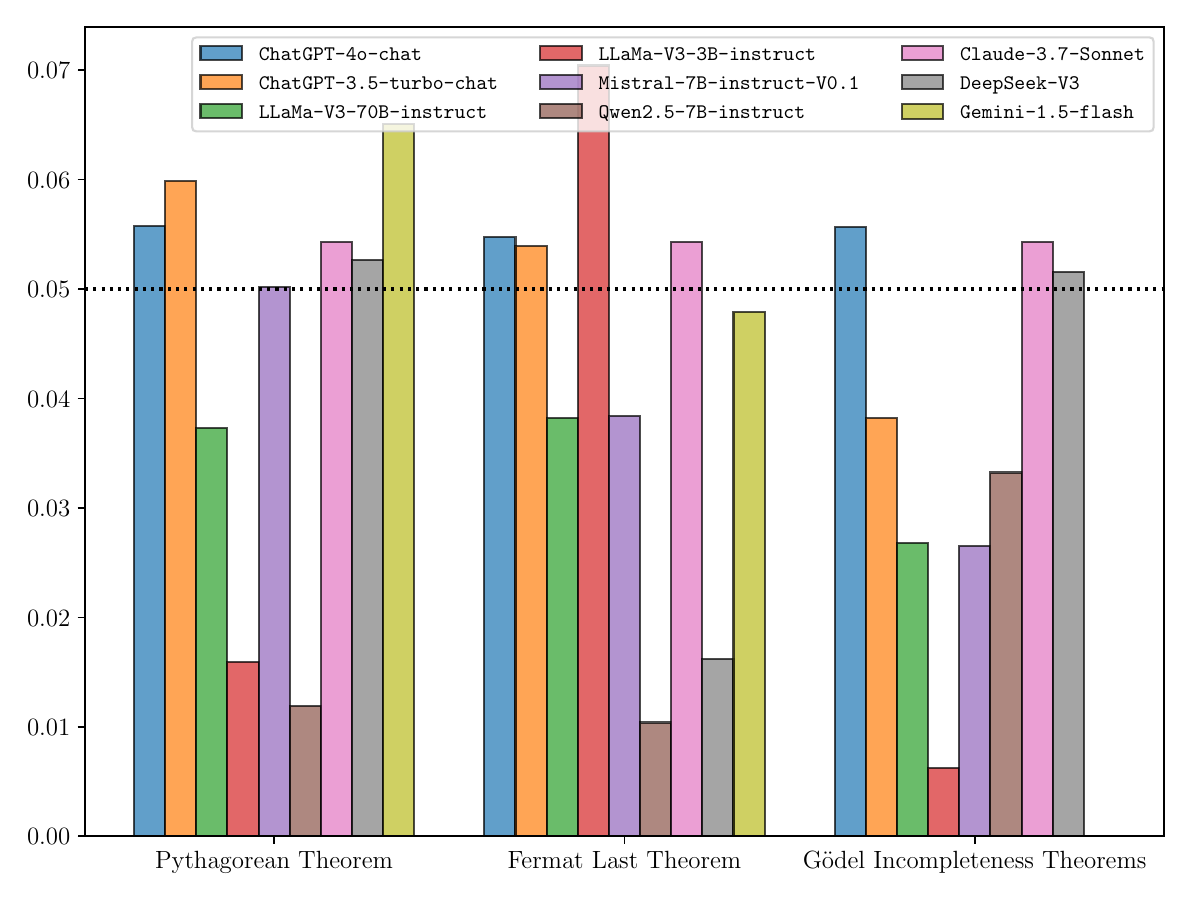}
\includegraphics[width=0.495\columnwidth]{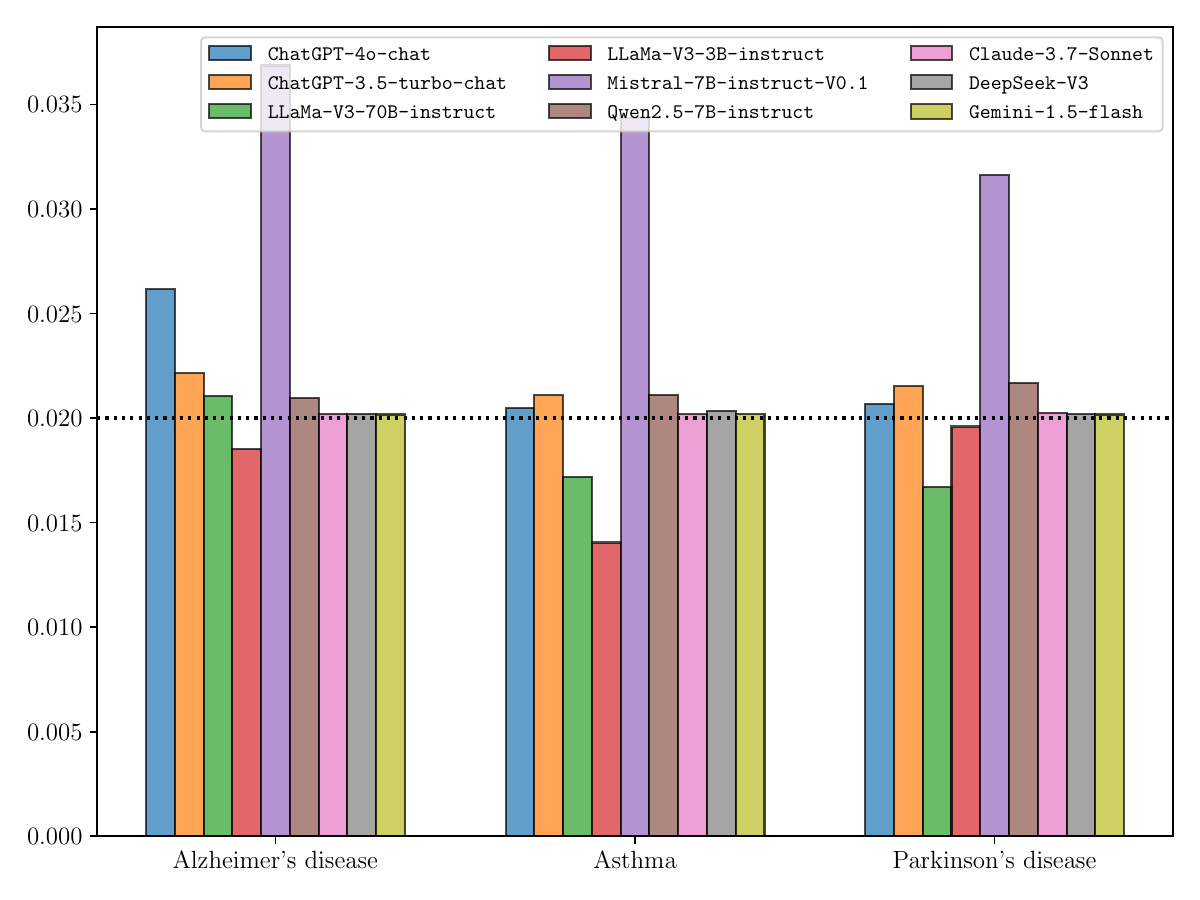}
\vspace{-15pt}  
\caption{Top three most frequently generated theorems (left) and human diseases (right). The black dotted line indicates the average relative frequency, defined by $1/N_{\mathrm{ans}}$, where $N_{\mathrm{ans}}$ is the number of knowledge items requested per query (20 for theorems and 50 for diseases).
}
\label{fig:top_theorem_count}
\vspace{-8pt}  
\end{figure}

\paragraph{Heavy-tailed prevalence.}
The prevalence $\{n_s\}_{s \ge 1}$ has a heavy-tailed distribution, that is, we observe nonzero counts $n_s \ge 1$ even for large values of $s$. 
This reflects LLMs' tendency to repeatedly generate a small number of highly common items, a pattern consistently observed across domains.
For theorem counting, as illustrated in Figure~\ref{fig:top_theorem_count} (left panel), LLMs frequently generate well-known examples like the Pythagorean Theorem, Fermat's Last Theorem, and Gödel's Incompleteness Theorems. 
Despite all models being queried 30,000 times (up to 20 responses per query), the number of valid theorems varies, affecting normalization. Nonetheless, the relative frequencies of these top responses---often exceeding 5\%---highlight their dominance and account for the heavy tail. 
Similarly, for human diseases, the figure on the right demonstrates LLMs repeatedly mention a small set of common human diseases (for example, Alzheimer's disease, Asthma, and Parkinson's disease). 
This motivates truncating the prevalence sequence in the SGT estimator. For instance, using $k = 6$ means that only $\{ n_s \}_{s=1}^6$ contribute to estimating $\widehat{N}_{\mathrm{seen}}(t)$, excluding high-frequency items. This truncation helps improve stability by reducing sensitivity to overrepresented responses.

\subsection{Application 2: Information Retrieval}

\paragraph{Setup.}
Our second application evaluates the information retrieval (IR) capabilities of LLMs using the \texttt{BioASQ-QA} Task 12b Test dataset \citep{krithara2023bioasq}, following the evaluation framework of \citep{zhou2024using}, focusing on two subtasks.
The first subtask is \textbf{document retrieval}, where the LLM is prompted to generate Boolean search queries to retrieve relevant documents from the PubMed database. These queries consist of multiple Medical Subject Headings (MeSH) keywords, combined using logical operators (AND, OR, NOT), and are submitted to a search engine to return candidate documents.
The second subtask is \textbf{question answering}, in which the LLM answers biomedical research questions---curated by domain experts---based on the retrieved documents.
In both subtasks, each question is associated with a set of ground-truth documents, and each document is annotated with a list of MeSH keywords. 
Instead of evaluating at the document level, we focus on MeSH keywords as retrieval-relevant knowledge units.
Each keyword reflects a specific biomedical concept that the model successfully identifies through retrieval. This offers a finer-grained view of what content the model can retrieve, beyond simply matching entire documents.
Indeed, traditional IR metrics---such as F1 score and ROUGE score 
 \citep{lin2004rouge}---assess retrieval and answer quality based on document relevance.
In contrast, our pipeline estimates how many additional relevant MeSH keywords an LLM could potentially retrieve---beyond those observed---if more questions were posed from a similar distribution and evaluated under the same validation criteria.

In our experiments, each LLM is evaluated on $N_{\text{query}} = 340$ biomedical questions using custom few-shot prompts (see Appendix~\ref{appen:prompts} for templates), with $N_{\text{ans}} = 1$ answer collected per question. The verification procedures differ across subtasks. In the document retrieval subtask, if the LLM retrieves a document that appears in the ground-truth set, all MeSH keywords associated with that document are counted as valid observed knowledge. In the question answering subtask, if the LLM's response is deemed correct, we include all MeSH keywords from the documents linked to that question. All MeSH keywords are normalized and clustered by the top-level concept of the MeSH hierarchy for consistent aggregation.

\begin{figure}[H]
  \centering
  \vspace{-5pt}  
  \includegraphics[width=0.4\textwidth]{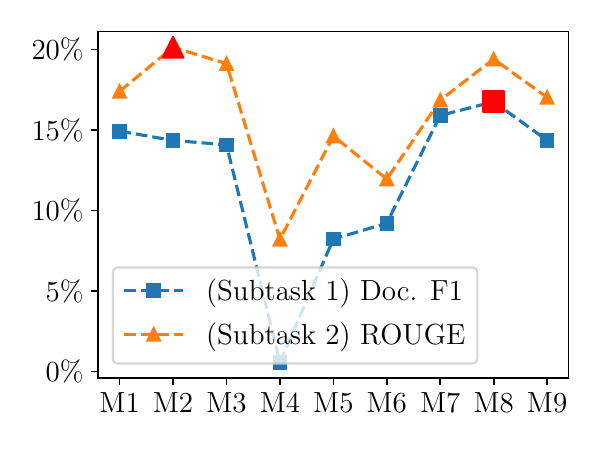}
  \vspace{-15pt}  
    \caption{Performance on selected traditional IR metrics. $M1$ denotes Model \ding{172} (\texttt{ChatGPT-4o-chat}), and similarly for the others (see Table \ref{tab:knowledge-counting} for the full list). The highest values are in red.}
  \label{fig:traditional-IR}
    \vspace{-5pt}  
\end{figure}

\paragraph{Results.}
Results for Application 2 are summarized in the left two columns of Table~\ref{tab:info-and-diversity}, with selected traditional IR metrics shown in Figure~\ref{fig:traditional-IR} and full results provided in Appendix~\ref{appen:traditional-IR}.
In Subtask 1, \texttt{DeepSeek-V3} achieves the highest observed count ($N_{\mathrm{seen}} = 2260$), consistent with its top document-level F1 score of 16.76\%. However, when incorporating estimated unseen knowledge, \texttt{ChatGPT-3.5-turbo-chat} surpasses all others with a total of $\widehat{N}_{\mathrm{tot}} = 10367$, significantly ahead of \texttt{DeepSeek-V3}'s 7750---demonstrating how our pipeline can reveal latent retrieval capacity not reflected in standard metrics.
In Subtask 2, \texttt{ChatGPT-4o-chat} leads in both observed and total MeSH terms, although its ROUGE score is slightly lower than the highest performer.
Overall, models exhibit SKR values around 25\% for Subtask 1 and 11\% for Subtask 2, both of which are lower than in Application 1.%

\begin{table}[!th]
\caption{Results of information retrieval (Application 2, left two columns) and diversity measures (Application 3, right two columns) across different LLMs.}
\label{tab:info-and-diversity}
\centering
\resizebox{\linewidth}{!}{%
\begin{tabular}{l|ccc|ccc||ccc|ccc}
\toprule
\multirow{2}{*}{\textbf{Model} }
& \multicolumn{3}{c|}{\textbf{Document Retrieval} }
& \multicolumn{3}{c||}{\textbf{Question Answering} }
& \multicolumn{3}{c|}{\textbf{LLM Applications} }
& \multicolumn{3}{c}{\textbf{Dream Jobs}} \\
& $N_{\mathrm{seen}}$ & $\widehat{N}_{\mathrm{tot}}$ &$\mathrm{SKR}$
& $N_{\mathrm{seen}}$ & $\widehat{N}_{\mathrm{tot}}$ &$\mathrm{SKR}$
& $N_{\mathrm{seen}}$ & $\widehat{N}_{\mathrm{tot}}$ &$\mathrm{SKR}$
& $N_{\mathrm{seen}}$ & $\widehat{N}_{\mathrm{tot}}$ &$\mathrm{SKR}$\\
\midrule 
\ding{172} \texttt{ChatGPT-4o-chat} & 2015 & 9676 & 0.21 & \textbf{2351} & \textbf{19965} &  0.12
& 165 &  714 &  0.23 & 409 &  1680 &  0.24 \\
\ding{173} \texttt{ChatGPT-3.5-turbo-chat} & 2190 & \textbf{10367} & 0.21 & 1850 & 15733 & 0.12 & 322 &  1339 &  0.24 & 131 &  560 &  0.23  \\
\ding{174} \texttt{LLaMA-V3-70B-instruct} & 1990 & 8488 & 0.23 & 1928 & 14270  & 0.14 & 437 &  1918 &  0.23 & 344 &  1487 &  0.23 \\
\ding{175} \texttt{LLaMA-V3-3B-instruct} & 79 & 396 &  \textbf{0.20} & 1653 & 14199 & 0.12 &   428 &  1926 &  0.22 & \textbf{770} &  \textbf{3386} &  0.23 \\
\ding{176} \texttt{Mistral-7B-instruct-v0.1} & 1364 & 5646 & 0.24 & 630 & 6596 & \textbf{0.10} &  {658} &  \textbf{3155} &  \textbf{0.21} & 233 &  1093 &  \textbf{0.21}  \\
\ding{177} \texttt{Qwen2.5-7B-instruct} & 1399 & 4853 & 0.28 & 1585 & 10710 & 0.15 &  421 &  1840 &  0.23 & 507 &  2094 &  0.24 \\
\ding{178} \texttt{Claude-3.7-Sonnet} & 2050 & 8831 & 0.23 & 2023 & 17230 & 0.12 &   \textbf{696} &  3013 &  0.23 & 133 &  543 &  0.24\\
\ding{179} \texttt{DeepSeek-V3} & \textbf{2260}  & 7750 & {0.30} & 2290 & 19744 & 0.12 &  17 &  48 &  0.35 & 7 &  10 &  0.7 \\
\ding{180} \texttt{Gemini-1.5-flash} & 2027 & 6616  & 0.31 & 2222 & 14898 & 0.15 &   21 &  37 &  0.57 & 3 &  10 &  0.3\\
\bottomrule
\end{tabular}}
\vspace{-1em}
\end{table}

\subsection{Application 3: Diversity Measure}

\paragraph{Setup.}
Our final application evaluates the diversity of different LLMs in response to open-ended prompts, such as ``Name a possible LLM application and explain why.'' Unlike the previous two applications, correctness verification is unnecessary, as any fluent and meaningful response is considered valid, which is typically satisfied by sufficiently large models.
To quantify diversity, we design two tasks. The first asks the LLM to describe a real or imagined application of LLMs. The second asks it to invent a job it might find appealing, if it could dream like a human. In both tasks, we instruct the models to generate short, simple responses with brief explanations. Each LLM is independently queried $N_{\mathrm{query}} = 1000$ times, collecting $N_{\mathrm{ans}} = 1$ response per query.
To cluster similar responses, we embed each response using OpenAI's \texttt{text-embedding-ada-002} model, which produces a 1536-dimensional semantic vector. Two responses are considered equivalent if the distance between their embeddings is smaller than a specified threshold. This clustering step is necessary, as LLMs may produce nearly identical applications or jobs but provide different reasons. 
To determine the threshold, we first compute the 10-nearest neighbor distances for each response within each LLM, resulting in one set of distances per model. We then aggregate these distances across all models and set the threshold to the $q$-quantile of the combined set.

\paragraph{Results.}
Results for diversity estimation at $q = 0.5$ are shown in the right two columns of Table~\ref{tab:info-and-diversity}. \texttt{Claude-3.7-Sonnet} produces the largest number of observed LLM applications, while \texttt{Mistral-7B-instruct-v0.1} has the highest estimated total knowledge. For the “dream jobs” task, \texttt{LLaMA-V3-3B-instruct} achieves the highest coverage in both observed count $N_{\mathrm{seen}}$ and total estimate $\widehat{N}_{\mathrm{tot}}$. In contrast, \texttt{DeepSeek-V3} and \texttt{Gemini-1.5-flash} produce the fewest outputs across both tasks, likely reflecting more deterministic generation behavior.
Most models exhibit an SKR near 23\%, indicating that only a small fraction of their internal knowledge is expressed through sampling. While \texttt{DeepSeek-V3} and \texttt{Gemini-1.5-flash} show higher SKRs (0.7 and 0.3, respectively), this likely results from limited generation diversity, leading to fewer unseen items.
This 23\% SKR is robust for different clustering thresholds $q \in \{0.2, 0.3, 0.5, 0.7\}$, with the top six models remaining within the 20–25\% range. This suggests a common tendency across LLMs: only a modest portion of their generative potential is typically realized.

\section{Validation and Sensitivity Analysis}
\label{sec:analysis}

\subsection{Cross-Validation of Unseen Knowledge Estimates}
\label{sec:validation}

To evaluate the accuracy of our pipeline, we perform held-out validation by splitting the $N_{\mathrm{query}}$ responses into two parts: the first $r_{\mathrm{obs}}$ fraction is treated as observed data, and we estimate the number of unique theorems that appear only in the remaining $(1 - r_{\mathrm{obs}})$ fraction. We repeat this procedure 100 times, each with a random shuffle of the full dataset so that the observed subset differs across runs. We report the average estimated and ground truth unseen counts in Figure~\ref{fig:sec-5-1-math_theorem_validation}, with error bars indicating the standard deviation over 100 repetitions.
We consider three values for $r_{\mathrm{obs}} \in \{1/2, 1/3, 1/4\}$, corresponding to a largest extrapolation factor of $t = 1, 2, 3$, respectively. Across all tested LLMs, the estimated unseen counts closely match the ground truth, implying that the SGT estimator yields highly accurate predictions for a wide range of extrapolation factors.

\begin{figure}[th]
\vspace{-5pt}  
\centering 
\includegraphics[width=\textwidth]{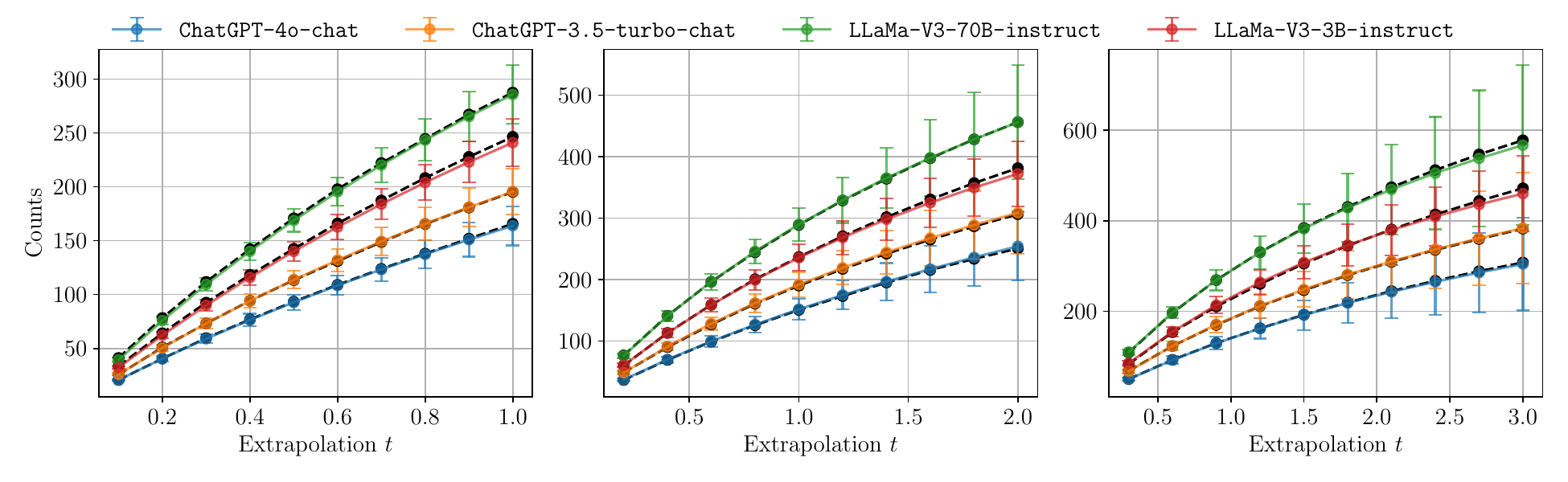}
\vspace{-20pt}  
\caption{SGT estimates (colored curves) versus ground truth (black dotted lines) for theorem estimation. From left to right, the observed fraction $r_{\mathrm{obs}}$ is $1/2$, $1/3$, and $1/4$. Counterpart results for human diseases are shown in Figure~\ref{fig:appendix-disease_validation} in the Appendix.}
\label{fig:sec-5-1-math_theorem_validation}
\vspace{-5pt}  
\end{figure}

\subsection{Sensitivity Analysis}

\paragraph{Effect of the extrapolation factor $t$.}
Next, we examine the effect of the extrapolation factor $t$. By definition, as $t$ increases, the estimated number of unseen items $\widehat{N}_{\mathrm{seen}}(t)$ should grow so that the SKR decreases. This trend is confirmed in the left two plots of Figure~\ref{fig:effect-t-prompt}, where we observe a monotonic increase in $\widehat{N}_{\mathrm{seen}}(t)$ and a corresponding decrease in SKR. Additionally, the variance of the estimate becomes larger as $t$ increases, reflecting greater uncertainty in longer-range extrapolation. In practice, we find that both quantities tend to saturate around $t=80$, with little gain beyond that point. This motivates our choice of $t = 100$ as a default in all experiments.

\begin{figure}[th]
\vspace{-5pt}  
\centering 
\includegraphics[width=0.7\textwidth]{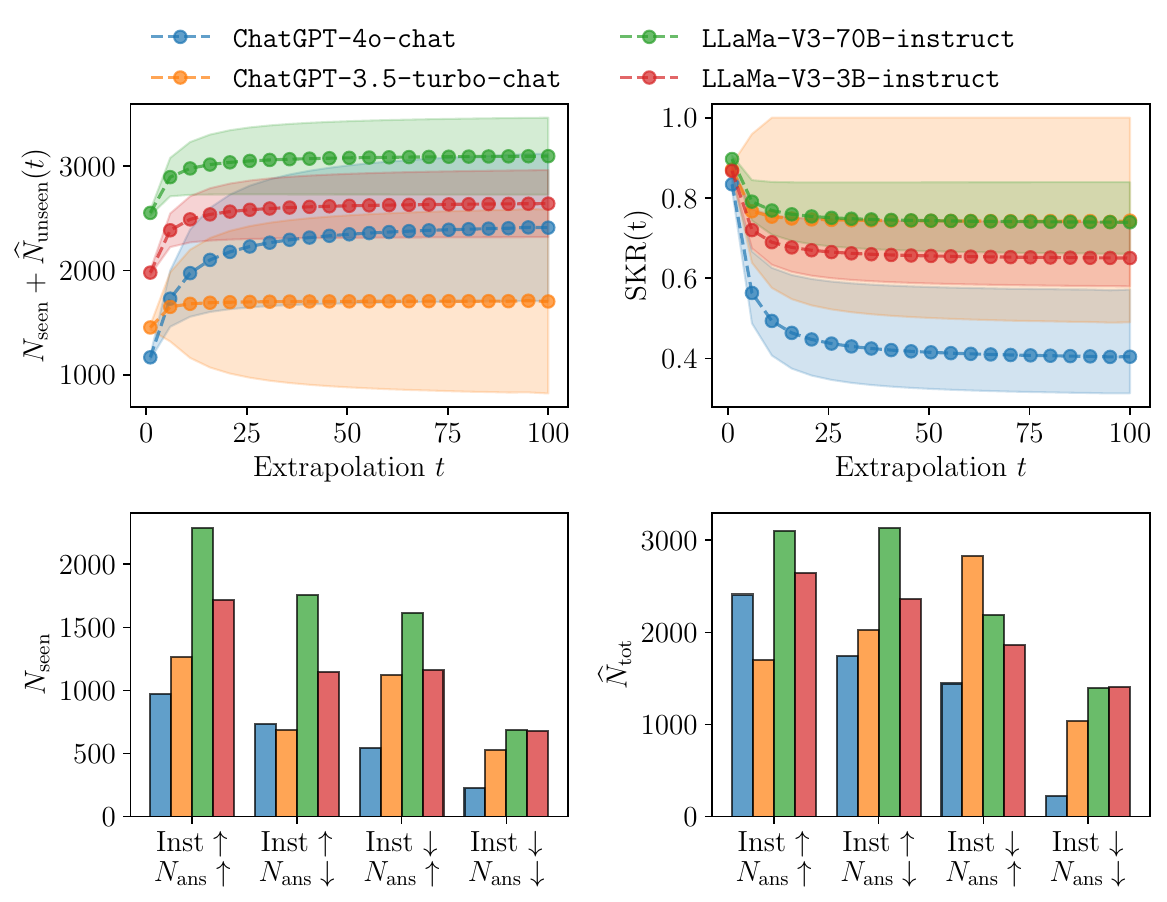}
\vspace{-10pt}  
\caption{Effects of the extrapolation factor $t$ (left two plots) and prompts (right two plots) for theorem
estimation. Counterpart results for human diseases are shown in Figure~\ref{fig:appendix-disease_effect-t-prompt} in the Appendix.
}
\label{fig:effect-t-prompt}
\vspace{-5pt}  
\end{figure}

\paragraph{Effect of the prompt.}
An important practical issue is how to design the prompt. We examine two key factors: the clarity of the instruction ($\mathrm{Inst}\uparrow$ or $\mathrm{Inst}\downarrow$) and the number of requested responses per query ($N_{\mathrm{ans}} \in \{10, 20\}$). We test four configurations that vary along these two dimensions. The results of theorem counting are shown in the right two plots of Figure~\ref{fig:effect-t-prompt}. Overall, prompts with clearer instructions and a higher number of requested responses lead to more diverse and informative outputs, increasing both $N_{\mathrm{seen}}$ and $\widehat{N}_{\mathrm{tot}}$ in our experiments, and thereby improving access to the model's internal knowledge. The set of used prompts is listed in Appendix \ref{appen:prompts}.

\paragraph{Effect of the truncation level $k$.}

The first plot in Figure~\ref{fig:effect-truncation-temp-sampling} shows how the estimated total knowledge $\widehat{N}_{\mathrm{tot}}$ varies with the truncation level $k$, while the second plot presents the normalized mean square error (MSE).
Overall, the estimator is relatively insensitive to the choice of $k$, suggesting that the prevalence histogram ${n_s}$ is generally stable and informative, which enables accurate extrapolation over a broad range of truncation levels.
Nonetheless, we recommend using the held-out validation procedure described in Section~\ref{sec:validation} to select $k$ adaptively for each setting. As illustrated in the second plot, most models perform best when $k=8$. The complete set of selected $k$ values for each LLM and application is provided in Appendix~\ref{appen:truncation-levels}.

\begin{figure}[ht]
\vspace{-8pt}  
\centering 
\includegraphics[width=0.7\textwidth]{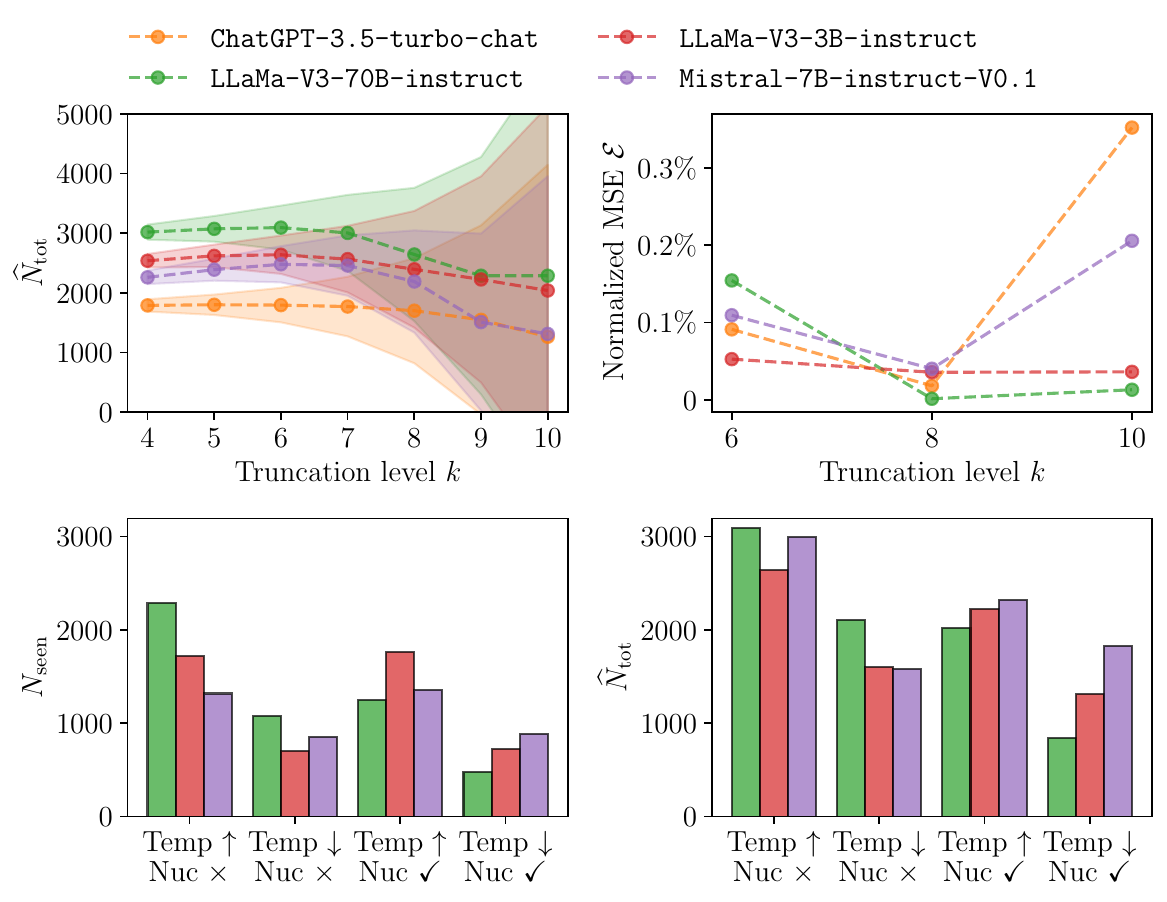}
\vspace{-10pt}  
\caption{Effects of the truncation level $k$ (left two plots) and sampling strategies (right two plots) for theorem
estimation. Counterpart results for human diseases are shown in Figure 
\ref{fig:appendix-effect-truncation-temp-sampling-disease} in the Appendix.
}
\label{fig:effect-truncation-temp-sampling}
\vspace{-10pt}  
\end{figure}

\paragraph{Effect of temperatures and sampling methods.}
LLMs generate responses through next-token prediction, meaning that sampling strategy directly influences their expressed capabilities. We consider two key factors: the temperature $\mathrm{Temp} \in \{0.7, 1\}$ and the use of nucleus sampling (denoted as $\mathrm{Nuc} \checkmark$ if applied and $\mathrm{Nuc} \times$ if not), which truncates the token distribution to the smallest set of tokens whose cumulative probability exceeds $0.9$ \citep{Holtzman2020The}.
The right two plots of Figure~\ref{fig:effect-truncation-temp-sampling} present results under these four configurations. We observe that using a higher temperature and disabling truncation yields the highest values for both $N_{\mathrm{seen}}$ and $\widehat{N}_{\mathrm{tot}}$. Therefore, we adopt a temperature of 1 and disable nucleus sampling in all experiments to best capture each model's capabilities.

\section{Discussion}

We have presented \textsc{KnowSum}, a five-step modular pipeline for estimating the unseen knowledge encoded in LLMs. By focusing on what models could output rather than what they do output, our method provides a complementary perspective to standard evaluation. We show its effectiveness across three applications: knowledge estimation, information retrieval, and diversity measurement.

Our framework opens several avenues for extension. While this work focuses on estimating the number of unseen knowledge items, similar estimators can be adapted to quantify knowledge that appears at least $s$ times \citep{hao2020optimal}. Currently, our framework focuses on well-defined, countable knowledge, such as named theorems or diseases, whose correctness can be easily verified. However, real-world knowledge is often more complex and less structured. For example, knowing the name of a theorem is not the same as understanding it, where ``understanding'' itself is difficult to define. As another example, if we shift our focus to proof techniques, it is generally hard for machines to automatically identify which techniques are used—especially since complex theorem proofs often involve multiple techniques and intricate relationships among them.
In principle, if appropriate verification and clustering methods can be developed, our pipeline could still apply in these settings. However, doing so is generally difficult, and we leave it as an open problem. Finally, a more ambitious direction is to move beyond estimation and toward extraction—actively surfacing knowledge that the model encodes but rarely generates. 
Transitioning from passive statistical inference to active knowledge discovery introduces both theoretical and practical challenges, and we view this as an important and impactful direction for future research.

\section*{Acknowledgments}
This work was supported in part by NIH grants, RF1AG063481 and U01CA274576, NSF DMS-2310679, a Meta Faculty Research Award, and Wharton AI for Business. The content is solely the responsibility of the authors and does not necessarily represent the official views of the NIH.


\bibliographystyle{plainnat}
\bibliography{bib/chatgpt,bib/LLMecology, bib/info_retrieval}

\newpage
\appendix
\newpage
\section{Proofs of Theorem \ref{thm:relation}}
\label{proof:thm-relation}

\begin{proof}[Proof of Theorem \ref{thm:relation}]
Recall that \( n \) is the total number of valid items. So we have that $n = \sum_{s \ge 1} s n_s$.
By the expression \eqref{eq:SGT} of the SGT estimator, we have 
$ \widehat{N}_{\mathrm{unseen}}(t) = \sum_{s=1}^kh_s n_s$. 
To prove the result, it suffices to show that
\begin{equation}
\label{eq:ratio-bound}
\left|\frac{\widehat{N}_{\mathrm{unseen}}(t)}{N_{\mathrm{obs}}}\right| \le  e^{\frac{kt}{t+1}}.
\end{equation}
We now proceed to prove \eqref{eq:ratio-bound}. It follows that
\begin{align*}
\frac{\widehat{N}_{\mathrm{unseen}}(t)}{ N_{\mathrm{obs}} } 
&= \frac{\sum_{s=1}^kh_s n_s}{\sum_{s \ge 1} n_s}
\le \max_{ 1 \le s \le k} \left|h_s\right|\\
&\le \max_{ 1 \le s \le k} t^s \cdot \PB\left(\mathrm{Bin}\left(k, \frac{1}{t+1}\right) \geq s\right) \\
&\overset{(a)}{\le} \max_{ 1 \le s \le k}  \left( \frac{tek }{(t+1) s} \right)^s e^{-\frac{k}{t+1}} \\
&\le  \max_{ 1 \le s \le k} \left( \frac{ek }{s} \right)^s e^{-\frac{k}{t+1}} \\
&\overset{(b)}{\le} e^{\frac{kt}{t+1}},
\end{align*}
where the inequality $(a)$ uses Lemma \ref{lem:tail} and $(b)$ uses the fact that $\left( \frac{ek }{s} \right)^s$ achieves its maximum when $s = k$ so that $\left( \frac{ek }{s} \right)^s \le e^k$.

\end{proof}

\begin{lem}[Tail bound]
\label{lem:tail}
\[
\PB\left(
\mathrm{Bin}\left(k, \frac{1}{t+1}\right)  \geq s\right) 
\leq  \left( \frac{e k}{(t+1) s} \right)^s.
\]
\end{lem}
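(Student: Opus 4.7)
The statement is a standard upper-tail Chernoff bound for a binomial distribution, so I expect a short proof with no real obstacle. I see two natural routes. The first is the classical exponential-moment (Chernoff) argument: for $X \sim \mathrm{Bin}(k,p)$ and any $\lambda > 0$, Markov's inequality applied to $e^{\lambda X}$ gives $\PB(X \ge s) \le e^{-\lambda s}(1-p+pe^\lambda)^k \le e^{-\lambda s}\exp\bigl(kp(e^\lambda-1)\bigr)$ after the elementary inequality $1+x \le e^x$; optimizing over $\lambda$ by setting $e^\lambda = s/(kp)$ and simplifying yields $(ekp/s)^s e^{-kp} \le (ekp/s)^s$, which with $p = 1/(t+1)$ is exactly the claimed bound. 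The second route, which I would actually write up because it is shorter and sidesteps the optimization, is a one-line Markov inequality applied to the binomial coefficient $\binom{X}{s}$.

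Concretely, the plan for the second route is as follows. First, note that for $s \ge 1$ the event $\{X \ge s\}$ coincides with $\{\binom{X}{s} \ge 1\}$, so Markov gives
\[
\PB\!\left(\mathrm{Bin}(k,p) \ge s\right) \;=\; \PB\!\left(\tbinom{X}{s} \ge 1\right) \;\le\; \EB\!\left[\tbinom{X}{s}\right].
\]
Next, compute the right-hand side combinatorially: writing $\binom{X}{s} = \sum_{S} \mathbf{1}\{S \subseteq \text{success set}\}$, where the sum runs over the $\binom{k}{s}$ size-$s$ subsets of the $k$ trials, independence of the Bernoulli$(p)$ trials gives each indicator mean $p^s$, hence $\EB[\binom{X}{s}] = \binom{k}{s} p^s$. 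Finally, apply the standard estimate $\binom{k}{s} \le (ek/s)^s$ and substitute $p = 1/(t+1)$ to conclude.

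I do not anticipate any real difficulty here. The identity $\EB[\binom{X}{s}] = \binom{k}{s}p^s$ is elementary, and the bound $\binom{k}{s} \le (ek/s)^s$ is a textbook consequence of $s! \ge (s/e)^s$ (itself a term-by-term lower bound in the Taylor expansion of $e^s$). The lemma is invoked in Theorem~\ref{thm:relation} only for $1 \le s \le k$, which is exactly the regime where the Markov step is nontrivial; for $s > k$ both sides vanish, so no case distinction is needed. The only stylistic choice is between the Chernoff and the Markov-on-$\binom{X}{s}$ presentations, and I would pick the latter for brevity.
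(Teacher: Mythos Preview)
Your proposal is correct. The paper's own argument is exactly your first route: it quotes the multiplicative Chernoff bound $\PB(X \ge s) \le \bigl(e^{\delta}/(1+\delta)^{1+\delta}\bigr)^\mu$ with $\delta = s/\mu - 1$ and $\mu = k/(t+1)$, then simplifies the logarithm to obtain $(e\mu/s)^s e^{-\mu}$ and substitutes $\mu$. Your preferred second route---Markov's inequality applied to $\binom{X}{s}$, the factorial-moment identity $\EB\bigl[\binom{X}{s}\bigr] = \binom{k}{s}p^s$, and the estimate $\binom{k}{s} \le (ek/s)^s$---is a genuinely different and shorter argument that proves the lemma exactly as stated. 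One trade-off worth flagging: the Chernoff route in fact yields the slightly sharper bound $\bigl(ek/((t+1)s)\bigr)^s e^{-k/(t+1)}$, and the paper silently relies on that extra $e^{-k/(t+1)}$ factor in step~(a) of the Theorem~\ref{thm:relation} proof to arrive at $e^{kt/(t+1)}$ rather than merely $e^{k}$. Your falling-factorial argument does not recover that factor, so if you adopt it you would either have to weaken the constant in Theorem~\ref{thm:relation} to $e^k$ or revert to Chernoff when proving the theorem.
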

\begin{proof}[Proof of Lemma \ref{lem:tail}]
Let \( X \sim \mathrm{Bin}(k, \frac{1}{t+1} ) \), and define the mean \( \mu = \mathbb{E}[X] = \frac{k}{t+1} \). We aim to upper bound the tail probability $\mathbb{P}(X \geq s)$ for \( s > \mu \).
To that end, we apply the multiplicative Chernoff bound:
\[
\mathbb{P}(X \geq s) \leq \left( \frac{e^{\delta}}{(1+\delta)^{1+\delta}} \right)^\mu,
\]
where \( \delta = \frac{s - \mu}{\mu} = \frac{s}{\mu} - 1 \).
Substituting this into the bound, we obtain:
\[
\mathbb{P}(X \geq s) \leq \left( \frac{e^{\frac{s}{\mu} - 1}}{\left( \frac{s}{\mu} \right)^{\frac{s}{\mu}}} \right)^\mu.
\]
Now we simplify the right-hand side. Taking logarithms:
\[
\log \mathbb{P}(X \geq s) \leq \mu \left( \frac{s}{\mu} - 1 - \frac{s}{\mu} \log \left( \frac{s}{\mu} \right) \right)
= -s \log \left( \frac{s}{\mu} \right) + s - \mu.
\]
Exponentiating both sides gives:
\[
\mathbb{P}(X \geq s) \leq \exp\left( -s \log \left( \frac{s}{\mu} \right) + s - \mu \right)
= \left( \frac{\mu}{s} \right)^s \cdot e^{s - \mu}.
\]
Finally, plugging in \( \mu = \frac{k}{t+1} \), we get:
\[
\mathbb{P}(X \geq s) \leq \left( \frac{e \cdot \mu}{s} \right)^s \cdot e^{-\mu} = \left( \frac{e k}{(t+1) s} \right)^s e^{-\frac{k}{t+1}}.
\]
\end{proof}

\section{Experiments Details}
\label{appen:experiment}

\subsection{General Setup}
In our work, we evaluate nine widely used LLMs: \texttt{ChatGPT-4o-chat} \citep{hurst2024gpt}, \texttt{ChatGPT-3.5-turbo-chat} \citep{openai2023chatgpt35}, \texttt{LLaMA-V3-70B-instruct} and \texttt{LLaMA-V3-3B-instruct} \citep{grattafiori2024llama}, \texttt{Mistral-7B-instruct-V0.1} \citep{jiang2023mistral7b}, \texttt{Qwen2.5-7B-instruct} \citep{yang2024qwen2}, \texttt{Claude-3.7-Sonnet} \citep{anthropic2025claude37}, \texttt{DeepSeek-V3} \citep{liu2024deepseek}, and \texttt{Gemini-1.5-flash} \citep{team2024gemini}.

To ensure fair comparisons, we query each model using carefully designed prompt templates (see Appendix~\ref{appen:prompts}), set the temperature to 1, and collect their responses. We then apply our estimation pipeline (Algorithm~\ref{alg:detection}) to estimate the amount of unseen knowledge.
Details on the verification and clustering procedures are provided in Appendix~\ref{appen:verification-clustering}, along with additional experimental results in the subsequent subsections.

\subsection{Used Prompts}
\label{appen:prompts}
The prompts used in Section~\ref{sec:experiment} are listed in Table~\ref{tab:prompts}, where the underlined words indicate the parts modified for sensitivity analysis.
For Application 2 Subtask 2, the prompt templates are shown in Table~\ref{tab:bioasq_qa_context_prompts}. In general, we use different templates for different types of questions to provide appropriate examples that facilitate effective in-context learning.

To test prompt sensitivity, we remove instructional content from the original prompt. In the theorem counting task, the simplified version becomes: ``Test your knowledge of mathematical theorems by listing 20 theorem names, separated by commas.'' In the human disease counting task, the simplified version becomes: ``You are a biomedical expert helping to compile a list of human diseases. Please provide 50 human diseases along with their corresponding DOID identifiers.''

\newpage
\begin{table}[p]
\centering
\caption{Prompts used across different applications and tasks.}
\label{tab:prompts}
\resizebox{0.8\linewidth}{!}{%
\begin{tabular}{@{}c|c|p{10cm}@{}}
\toprule
\textbf{Application} & \textbf{Task} & \textbf{Prompt} \\ 
\midrule

\multirow{14}{*}{\centering 
\rotatebox{90}{Application 1}
} 
& 
\multirow{7}{*}{Theorems counting} 
& 
\begin{minipage}[t]{\linewidth}\ttfamily
Test your knowledge of mathematical theorems by listing \underline{20} theorem names, separated by commas (e.g., Theorem 1, Theorem 2, Theorem 3, etc.). If a theorem is known by multiple names, choose the most classic one. Aim for variety and try to include rare or less commonly known theorems. Keep your response concise and to the point.
\end{minipage} \\
\cmidrule{2-3}
& 
\multirow{7}{*}{\centering Diseases counting} 
& 
\begin{minipage}[t]{\linewidth}\ttfamily
You are a biomedical expert helping to compile a list of human diseases.

Please provide \underline{50} human diseases along with their corresponding DOID identifiers.

Format your response as:
- Disease Name (DOID:xxxxx)

Make sure the Disease Name matches with DOID, and the DOIDs are valid from Disease Ontology.
\end{minipage} \\
\midrule

\multirow{19}{*}{\centering \rotatebox{90}{Application 2}} 
& 
\multirow{19}{*}{\centering BioASQ retrieval \cite{zhou2024using}}  
& 
\begin{minipage}[t]{\linewidth}\ttfamily
Given a question, expand into a search query for PubMed by incorporating synonyms and additional terms that would yield relevant search results from PubMed to the provided question while not being too restrictive. Assume that phrases are not stemmed; therefore, generate useful variations. Return only the query that can directly be used without any explanation text.

Question: What is the mode of action of Molnupiravir?\\
Query: Molnupiravir AND ("mode of action" OR mechanism).

Question: Is dapagliflozin effective for COVID-19?\\
Query: dapagliflozin AND (COVID-19 OR SARS-CoV-2 OR coronavirus) AND (efficacy OR effective OR treatment).

Question: Name monoclonal antibody against SLAMF7.\\
Query: "SLAMF7" AND ("monoclonal antibody" OR "monoclonal antibodies").

Question: \{context\}\\
Query: \{body\}
\end{minipage} \\
\midrule

\multirow{7}{*}{\centering\rotatebox{90}{Application 3}} 
& 
\multirow{3}{*}{\centering LLMs applications} 
& 
\begin{minipage}[t]{\linewidth}\ttfamily
Briefly describe one real or imagined application of large language models (LLMs). Keep the response short and simple, and explain why it's useful.
\end{minipage} \\
\cmidrule{2-3}
& \multirow{4}{*}{\centering Dream jobs}  & 
\begin{minipage}[t]{\linewidth}\ttfamily
Imagine that a large language model (LLM) could dream like a human. Describe, in one or two sentences, what kind of job it might wish to have. Be creative, and explain briefly why this job would appeal to an LLM.
\end{minipage} \\
\bottomrule
\end{tabular}}
\end{table}

\clearpage
\begin{table}[p]
\centering
\caption{Context prompt templates used for each question type in Application 2, Subtask 2 (Question Answering) of the BioASQ challenge.}
\label{tab:bioasq_qa_context_prompts}
\resizebox{0.6\linewidth}{!}{%
\begin{tabular}{@{}c|p{12cm}@{}}
\toprule
\textbf{Question Type} & \textbf{Prompt Templates} \\
\midrule

\multirow{20}{*}{\centering Yes/No}
 & 
\begin{minipage}[t]{\linewidth}\ttfamily
Here are some example yesno questions with answers.\\
\#\#\#Context: Papilins are homologous, secreted extracellular matrix proteins which share a common order of protein domains.\\
Question: Is the protein Papilin secreted?\\
Ideal answer: Yes, papilin is a secreted protein\\
Exact answer: yes\\
\#\#\#Context: Most lncRNAs are under lower sequence constraints than protein-coding genes and lack conserved secondary structures, making it hard to predict them computationally.\\
Question: Are long non coding RNAs as conserved in sequence as protein coding genes?\\
Ideal answer: No. Most long non coding RNAs are under lower sequence constraints than protein-coding genes.\\
Exact answer: no\\
\#\#\#Now answer the following yesno question:\\
\#\#\#Context: \{context\}\\
Question: \{body\}\\
Your Answer should be the following format:\\
Ideal answer:<Your Ideal Answer>. Exact answer:<Your Exact Answer>.
\end{minipage} \\
\midrule

\multirow{15}{*}{\centering Summary} & 
\begin{minipage}[t]{\linewidth}\ttfamily
Here are some example summary questions with answers.\\
\#\#\#Context: Hirschsprung disease (HSCR) is a multifactorial, non-mendelian disorder in which rare high-penetrance coding sequence mutations in the receptor tyrosine kinase RET contribute to risk in combination with mutations at other genes.\\
Question: Is Hirschsprung disease a mendelian or a multifactorial disorder?\\
Answer: Coding sequence mutations in RET, GDNF, EDNRB, EDN3, and SOX10 are involved in the development of Hirschsprung disease. The majority of these genes was shown to be related to Mendelian syndromic forms...\\
\#\#\#Now answer the following summary question:\\
Context: \{context\}\\
Question: \{body\}\\
Answer:
\end{minipage} \\
\midrule

\multirow{13}{*}{\centering List}  & 
\begin{minipage}[t]{\linewidth}\ttfamily
Here are some example list questions with answers.\\
\#\#\#Context: The FGFR3 P250R mutation was the single largest contributor (24\%) to the genetic group.\\
Question: Which human genes are more commonly related to craniosynostosis?\\
Ideal answer: The genes that are most commonly linked to craniosynostoses are...\\
Exact answer: FGFR3;FGFR2;FGFR1;...\\
\#\#\#Now answer the following list question:\\
Context: \{context\}\\
Question: \{body\}\\
Ideal answer:<Your Ideal Answer>. Exact answer:<Your Exact Answer>.
\end{minipage} \\
\midrule

\multirow{13}{*}{\centering Factoid} & 
\begin{minipage}[t]{\linewidth}\ttfamily
Here are some example factoid questions with answers.\\
\#\#\#Context: Ewing sarcoma is the second most common bone malignancy in children and young adults.\\
Question: Which fusion protein is involved in the development of Ewing sarcoma?\\
Ideal answer: Ewing sarcoma is driven by oncogenic fusion protein EWS/FLI1...\\
Exact answer: EWS;FLI1\\
\#\#\#Now answer the following factoid question:\\
Context: \{context\}\\
Question: \{body\}\\
Ideal answer:<Your Ideal Answer>. Exact answer:<Your Exact Answer>.
\end{minipage} \\

\bottomrule
\end{tabular}}
\end{table}

\clearpage
\subsection{Details of Verification and Clustering}
\label{appen:verification-clustering}

 \subsubsection{Application 1: Knowledge Estimation}

 \paragraph{Mathematical theorems.}
In the \textbf{verification} step, we sequentially cross-reference each generated item against three external sources: Wikipedia, MathSciNet, and ProofWiki. The process stops as soon as a match is found. Two theorems are considered identical if they are verified by the same webpage or share the same URL.

In the \textbf{clustering} step, since a single theorem may have multiple equivalent names, we assign a canonical identifier to each theorem by extracting a standardized name from the corresponding URL. For Wikipedia and ProofWiki, we parse the page title from the URL path, convert it to lowercase, and replace underscores with spaces. For MathSciNet, we extract the search query parameter. All extracted names are further passed through a normalization routine to reduce inconsistencies due to formatting or spelling variations. This process ensures that each valid theorem instance is mapped to a unique and consistent name across sources.

\paragraph{Human Disease Oncology.}

In the \textbf{verification} step, each LLM-generated disease name is evaluated by matching it against official disease terms and their synonyms. These synonyms are drawn from the ``term.synonym'' field in the OBO file provided by the Disease Ontology. Matching is performed using fuzzy string comparison via the ``rapidfuzz'' Python package, with a similarity threshold of 0.9. The verification process terminates once a valid match is found.

In the \textbf{clustering} step, all matched terms are grouped according to their corresponding Disease Ontology Identifier (DOID). For example, if an LLM generates a disease name along with two of its synonyms, all three generated names will be mapped to the same DOID, and the count for that DOID will be incremented by three.

 \subsubsection{Application 2: Information Retrieval}

We evaluate the information retrieval capabilities of LLMs using the official \texttt{Task12BGolden} test set from the BioASQ Challenge \citep{krithara2023bioasq}. Each question in this dataset is annotated with a set of relevant biomedical documents, each uniquely identified by a PubMed ID. For each PubMed ID (or equivalently each biomedical document), we retrieve the associated Medical Subject Headings (MeSH) terms using the NCBI E-utilities API (\url{https://eutils.ncbi.nlm.nih.gov/entrez/eutils/efetch.fcgi}).  Throughout this application, normalized MeSH terms serve as the countable unit of biomedical knowledge—analogous to the role of theorems in Application 1. In the \textbf{clustering} step, keywords were normalized to a unique form by splitting the original name on the slash delimiter (``/'') and retaining the last segment in lowercase.

\paragraph{Subtask 1: Document Retrieval.}
This subtask assesses an LLM's ability to formulate effective Boolean queries for retrieving relevant scientific articles from the PubMed database. To measure unseen knowledge, we estimate the number of additional, correct MeSH terms the model could uncover if queried with semantically similar inputs. During the \textbf{generation} phase, we prompt the LLM with a BioASQ question using a fixed template (see Table~\ref{tab:prompts}), and use the resulting query to search PubMed via the E-utilities API (\texttt{db = ``pubmed''}). In the \textbf{verification} phase, we check whether retrieved PubMed IDs match any gold-standard references. If a match occurs, we accumulate all MeSH terms associated with the matching documents. Because each article can have multiple MeSH terms, and each question can be linked to multiple documents, this setup provides a rich space of retrievable knowledge.

\paragraph{Subtask 2: Question Answering.}

The second subtask evaluates LLMs' ability to answer questions based on retrieved biomedical content. In the \textbf{generation} phase, to ensure a fair comparison, all models receive the same gold-standard document snippets as input. This setting makes use of in-context learning. See Table~\ref{tab:bioasq_qa_context_prompts} for the prompt template.
For each BioASQ question, the model generates two types of responses: (1) an \textit{exact answer}, which may be of type yes/no, list, or factoid, and is evaluated using standard metrics such as accuracy, precision, recall, and F1 scores; and (2) an \textit{ideal answer}, which is a free-form summary evaluated using ROUGE.

 In the \textbf{verification} phase, for yes/no and factoid questions, we compare the LLM-generated answer directly against the reference label. For list and summary questions, we apply threshold-based criteria to assess correctness. Unlike document-level evaluation in Subtask 1, this subtask operates at the question level. If the model's output is deemed correct, we credit all the MeSH terms associated with that question's gold documents as successfully recovered. We then apply our unseen knowledge estimation framework to infer the number of MeSH terms the model could generate if exposed to additional questions.

 \subsubsection{Application 3: Diversity Measure}

We don't verify responses in this application. We describe how we cluster responses: each response is embedded using OpenAI's \texttt{text-embedding-ada-002} model, which maps text to a 1536-dimensional semantic vector. Two responses are considered equivalent if the distance between their embeddings is below a chosen threshold. To set this threshold, we compute the 10-nearest neighbor distances for each response within each LLM, aggregate the distances across all models, and use the $q$-quantile of the combined set. We evaluate $q \in \{0.2, 0.3, 0.5, 0.7\}$.

\subsection{Verification Criteria}

In the verification step for Application 1, we apply two levels of verification criteria: one strict and one more relaxed. Since all valid knowledge items—whether theorems or human diseases—are sourced from external databases, we visualize the scope of these datasets to assess how permissive the relaxed verification criterion can be while still maintaining validity in Figure \ref{fig:subcategory_distribution}.

\begin{figure}[th]
\vspace{-1em}
\centering
\includegraphics[width=\columnwidth]{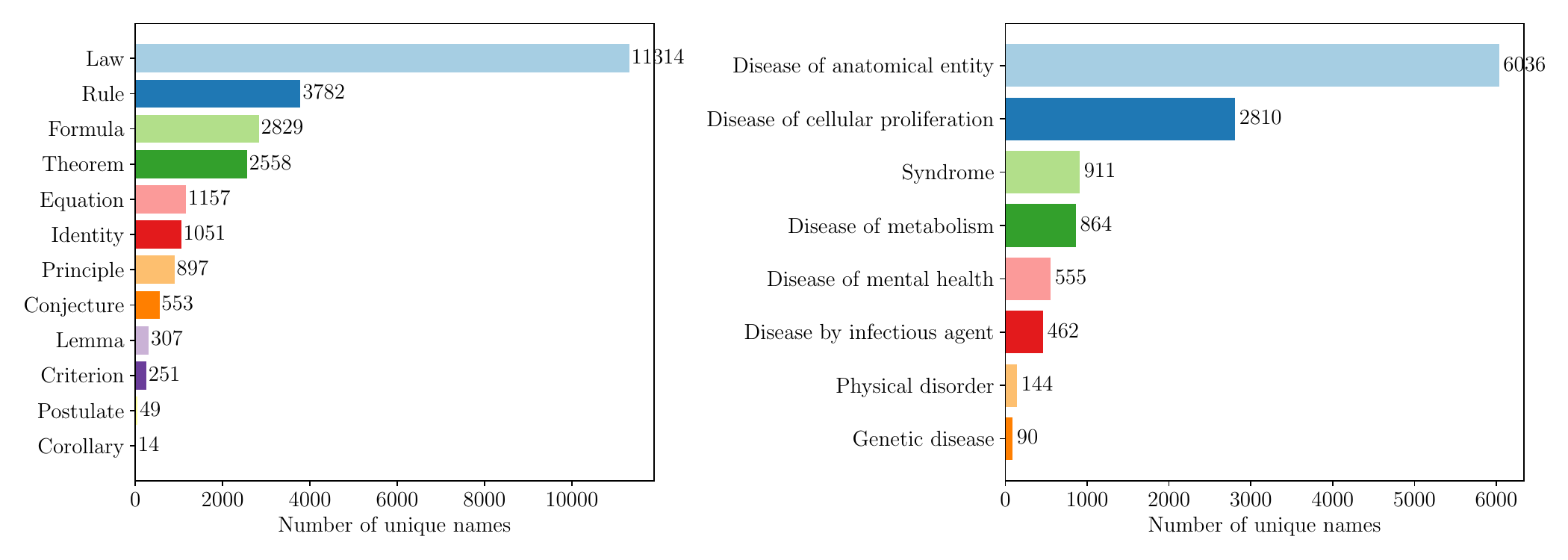}
\caption{Distribution of math concepts (\textbf{left}) and human disease subcategories (\textbf{right}). These distributions are used as filtering criteria to exclude invalid knowledge items under the strict criteria setting in Application 1.}
\label{fig:subcategory_distribution}
\vspace{-1em}
\end{figure}

\paragraph{Mathematical theorems.} 
For theorems, the strict criterion requires that the final name (extracted from the URL) contains the word ``theorem'', while the relaxed criterion accepts any name that includes one of 12 math concepts (i.e., ``theorem,'' ``lemma,'' ``law,'' ``principle,'' ``formula,'' ``criterion,'' ``identity,'' ``conjecture,'' ``rule,'' ``equation,'' ``postulate,'' or ``corollary.''). The left plot in Figure~\ref{fig:subcategory_distribution} shows the distribution of different theorem types in the reference databases. Notably, law is the most common type, with 11,314 entries, while the theorem type includes only 2,558 entries.

\paragraph{Human diseases.} 
For human diseases, the strict criterion includes only the ``anatomical disease'' category—i.e., diseases that affect specific body parts. The relaxed criterion allows any disease listed in the Disease Ontology database, regardless of category. The right plot in Figure~\ref{fig:subcategory_distribution} displays the distribution of disease types in the reference databases. Notably, anatomical disease is the most common category, with 6036 entries. 

\subsection{Used Truncation Levels}
\label{appen:truncation-levels}

 All the truncation levels used in different applications are collected in Table \ref{tab:truncartion-levels}.

\begin{table}[th]
\caption{Truncation levels used in different applications.}
\label{tab:truncartion-levels}
\centering
\resizebox{\linewidth}{!}{%
\begin{tabular}{l|ccccc}
\toprule
\textbf{Model} 
& \makecell{\textbf{Application 1} \\ Theorems}
& \makecell{\textbf{Application 1} \\ Human diseases}
& \makecell{\textbf{Application 2} \\ Retrieval}
& \makecell{\textbf{Application 2} \\ Question Answer}
& \makecell{\textbf{Application 3} \\ Diversity} \\
\midrule
\ding{172} \texttt{ChatGPT-4o-chat} & 8 & 8 & 8 & 8 & 6 \\
\ding{173} \texttt{ChatGPT-3.5-turbo-chat} & 8 & 8 & 8 & 8 & 6 \\
\ding{174} \texttt{LLaMA-V3-70B-instruct} & 8 & 8 & 8 & 8 & 6 \\
\ding{175} \texttt{LLaMA-V3-3B-instruct} & 6 & 8 & 8 & 8 & 6 \\
\ding{176} \texttt{Mistral-7B-instruct-v0.1} & 6 & 10 & 10 & 10 & 6 \\
\ding{177} \texttt{Qwen2.5-7B-instruct} & 8 & 8 & 6 & 6 & 6 \\
\ding{178} \texttt{Claude-3.7-Sonnet} & 8 & 8 & 8 & 8 & 6 \\
\ding{179} \texttt{DeepSeek-V3} & 8 & 8 & 8 & 8& 6 \\
\ding{180} \texttt{Gemini-1.5-flash} & 8 & 6 & 6 & 6 & 6 \\
\bottomrule
\end{tabular}}
\vspace{-1em}
\end{table}

\subsection{Cross-Validation of Unseen Human Disease Estimates}

We present the human disease counting results from the held-out validation analysis, complementing the theorem results shown in Figure~\ref{fig:sec-5-1-math_theorem_validation}. As shown in Figure~\ref{fig:appendix-disease_validation}, the estimator achieves similarly strong performance. The estimated counts of unseen diseases closely match the ground truth, confirming that the SGT estimator yields highly accurate predictions across a wide range of extrapolation factors.

\begin{figure}[th]
\vspace{-10pt}  
\centering 
\includegraphics[width=\textwidth]{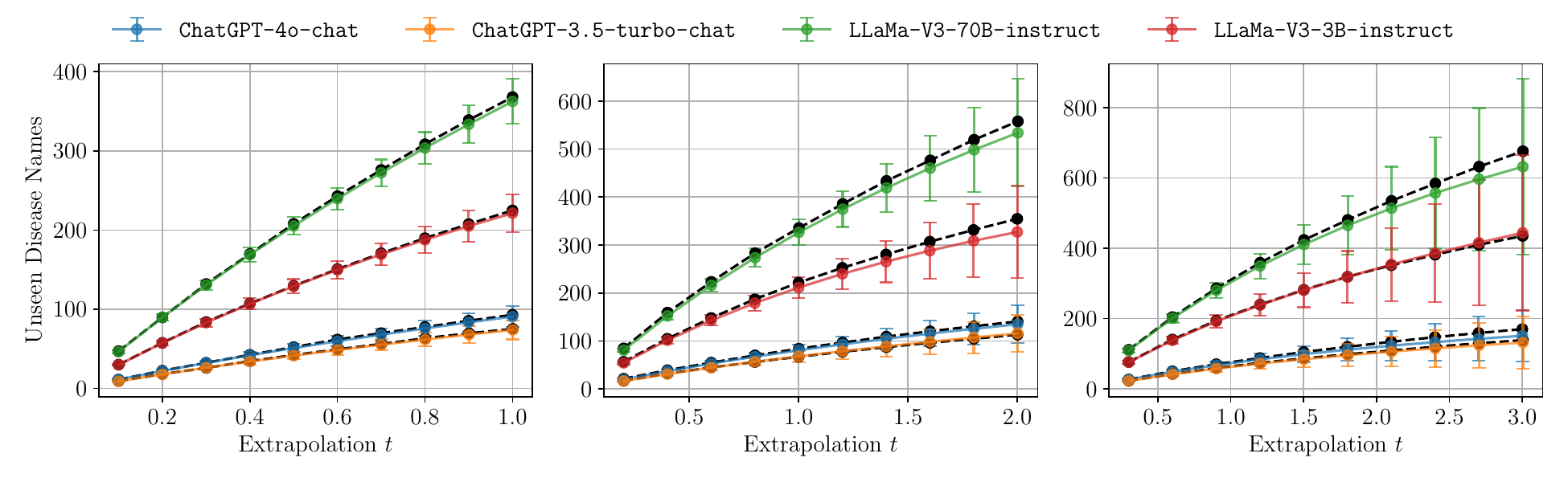}
\vspace{-15pt}  
\caption{SGT estimates (colored curves) versus ground truth (black dotted lines) for disease estimation. From left to right, the observed fraction $r_{\mathrm{obs}}$ is $1/2$, $1/3$, and $1/4$.}
\label{fig:appendix-disease_validation}
\vspace{-5pt}  
\end{figure}

\subsection{Additional Results on Extrapolation and Prompts}

We present the counterpart of human diseases of Figure \ref{fig:effect-t-prompt}. We examine two key factors: the clarity of the instruction ($\mathrm{Inst}\uparrow$ or $\mathrm{Inst}\downarrow$) and the number of requested responses per query ($N_{\mathrm{ans}} \in \{30, 50\}$) As shown in Figure \ref{fig:appendix-disease_effect-t-prompt} left two plots, we can see that the estimator converges at $t=100$ for \texttt{ChatGPT-4o-Chat}, $t=80$ for $\texttt{LLaMa-V3-70b-instruct}$ and $\texttt{LLaMa-V3-3b-instruct}$, and $t=25$ for \texttt{ChatGPT-3.5-turbo-Chat}. From the right two plots of Figure \ref{fig:appendix-disease_effect-t-prompt}, we can see that clear instruction and high number of response can increase the diversity and elicit more LLM Unseen knowledge.

\begin{figure}[th]
\vspace{-8pt}  
\centering 
\includegraphics[width=\textwidth]{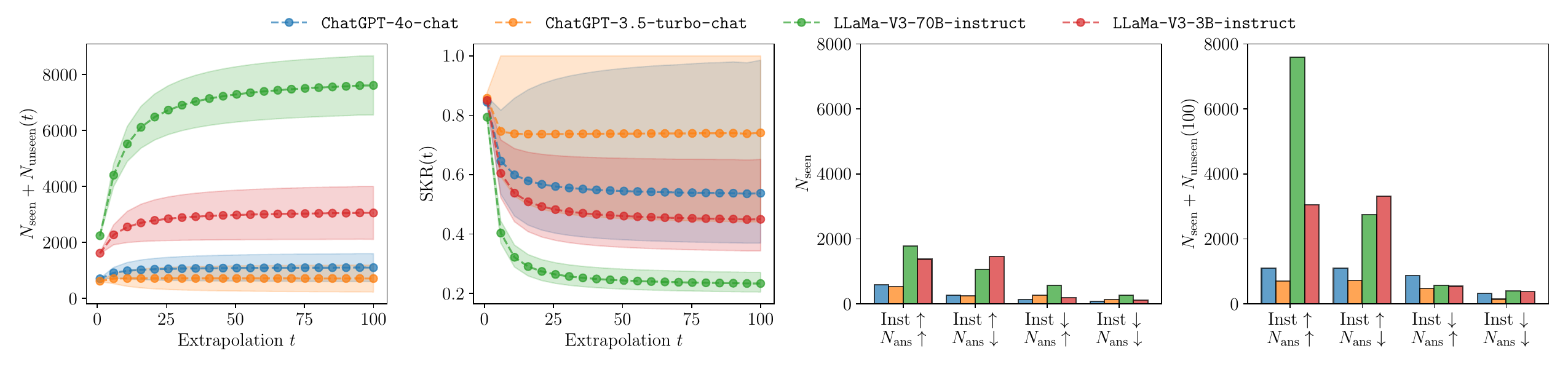}
\vspace{-15pt}  
\caption{Effects of the extrapolation factor $t$ (left two plots) and prompts (right two plots) for disease estimation.
}
\label{fig:appendix-disease_effect-t-prompt}
\vspace{-5pt}  
\end{figure}

\subsection{Additional Results on Truncation Levels and Sampling Strategies}

We present additional results for human diseases in Figure~\ref{fig:appendix-effect-truncation-temp-sampling-disease}, complementing Figure~\ref{fig:effect-truncation-temp-sampling}. The left two plots show that the estimator is relatively robust to different values of the truncation level $k$. The right two plots indicate that using a temperature of 1 and disabling nucleus sampling consistently yields higher values of both $N_{\mathrm{seen}}$ and $N_{\mathrm{tot}}$ across models.
 
\begin{figure}[H]
\vspace{-8pt}  
\centering 
\includegraphics[width=\textwidth]{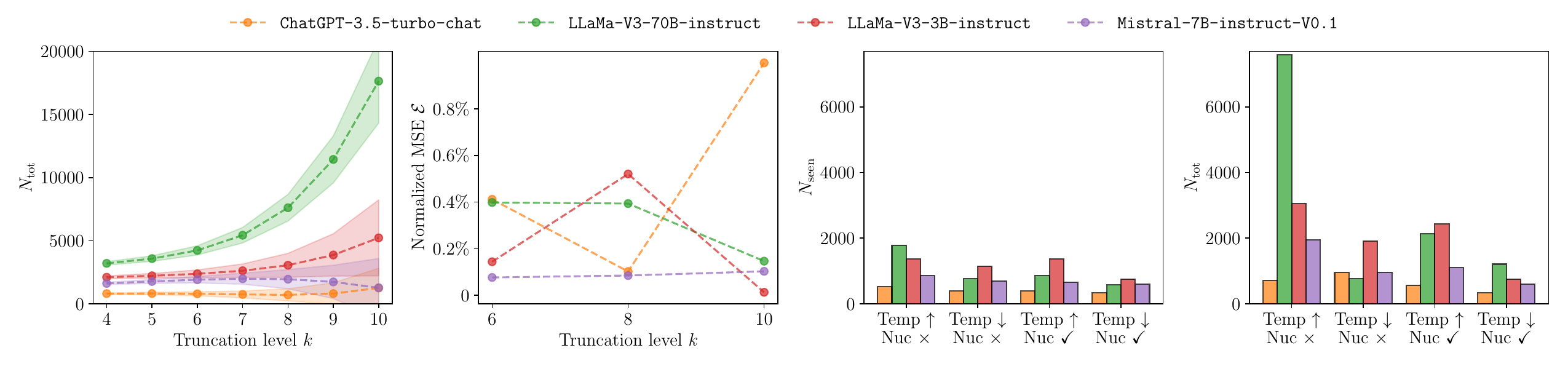}
\vspace{-15pt}  
\caption{Effects of the truncation level $k$ (left two plots) and sampling strategies (right two plots) for disease estimation. 
}
\label{fig:appendix-effect-truncation-temp-sampling-disease}
\vspace{-8pt}  
\end{figure}

\subsection{Responses Analysis}
\label{appen:responses}

Table~\ref{tab:model_topk_theorems_counts} presents the most frequently generated mathematical theorems across nine LLMs, each prompted to list 20 theorem names over 30,000 runs. The table reports the top-8 theorems per model, along with their absolute counts and relative frequencies. Classical results such as the ``Pythagorean Theorem,'' ``Fermat's Last Theorem,'' and ``Gödel's Incompleteness Theorems'' appear consistently across nearly all models, suggesting that these iconic theorems are embedded as core knowledge. At the same time, notable differences emerge: \texttt{Qwen2.5-7B-instruct} strongly favors the ``Fundamental Theorem of Calculus,'' while \texttt{LLaMA-V3-3B-instruct} uniquely highlights the ``Sylvester–Gallai Theorem'' and ``Navier–Stokes Equations.'' These variations likely reflect differences in pretraining data coverage or model specialization.

Table~\ref{tab:model_topk_human_disease_counts} shows the top-8 most frequently generated human diseases from each LLM, prompted to list 50 disease names over 3000 runs. As in the theorem case, many models converge on a core set of common diseases, including ``Alzheimer's Disease,'' ``Parkinson's Disease,'' ``Asthma,'' and ``Multiple Sclerosis.'' These results reflect the centrality of these conditions in biomedical corpora. Nonetheless, distinct model behaviors are evident: for example, \texttt{Gemini-1.5-flash} emphasizes ``Coronary Artery Disease'' and ``Type 2 Diabetes Mellitus,'' while \texttt{LLaMA-V3-3B-instruct} features rarer conditions like ``Maturity-Onset Diabetes of the Young Type 1.'' Such differences likely stem from variations in training data granularity and domain emphasis.

Table~\ref{tab:model_topk_anatomical_disease_counts} focuses on anatomical diseases and follows the same experimental setup. Common conditions like ``Hepatitis,'' ``Alzheimer's Disease,'' ``Parkinson's Disease,'' and ``Asthma'' again dominate the responses, with ``Hepatitis'' especially prominent across models such as \texttt{ChatGPT-4o-chat}, \texttt{DeepSeek-V3}, and \texttt{Claude-3.7-Sonnet}. However, each model still exhibits unique tendencies: for instance, \texttt{LLaMA-V3-70B-instruct} frequently generates ``Frontotemporal Dementia'' and ``Chronic Kidney Disease,'' while \texttt{Mistral-7B-Instruct-v0.1} leans toward ``Cerebrovascular Disease'' and ``Heart Disease.'' These findings further underscore how LLMs encode not only general but also model-specific biomedical knowledge, shaped by their underlying training regimes.

\begin{table}[th]
\caption{Top-8 theorem outputs per model with their counts and fractions.}
\label{tab:model_topk_theorems_counts}
\centering
\resizebox{\linewidth}{!}{%
\begin{tabular}{@{}l|ll@{}}
\toprule
\textbf{Model} & \multicolumn{2}{c}{\textbf{Top-8 Outputs with Counts and Fractions}} \\
\midrule
\multirow{4}{*}{\ding{172} \texttt{ChatGPT-4o-chat}} 
  & Pythagorean Theorem (29848, 0.0557) & Brouwer Fixed Point Theorem (29813, 0.0557) \\
  & Gödel Incompleteness Theorems (29807, 0.0557) & Noether Theorem (29583, 0.0552) \\
  & Fermat Last Theorem (29337, 0.0548) & Ramsey Theorem (25767, 0.0481) \\
  & Green Theorem (19582, 0.0366) & Cantor Theorem (18837, 0.0352) \\
\midrule
\multirow{4}{*}{\ding{173} \texttt{ChatGPT-3.5-turbo-chat}} 
  & Pythagorean Theorem (32631, 0.0599) & Fermat Last Theorem (29370, 0.0539) \\
  & Bayes Theorem (25364, 0.0465) & Fundamental Theorem of Calculus (23909, 0.0439) \\
  & Gödel Incompleteness Theorems (20820, 0.0382) & Brouwer Fixed Point Theorem (20291, 0.0372) \\
  & Bolzano Weierstrass Theorem (18585, 0.0341) & Jordan Curve Theorem (17976, 0.0330) \\
\midrule
\multirow{4}{*}{\ding{174} \texttt{LLaMA-V3-70B-instruct}} 
  & Fundamental Theorem of Algebra (19825, 0.0406) & Fermat Last Theorem (18687, 0.0382) \\
  & Pythagorean Theorem (18214, 0.0373) & Brouwer Fixed Point Theorem (13398, 0.0274) \\
  & Gödel Incompleteness Theorems (13089, 0.0268) & Gauss Bonnet Theorem (12233, 0.0250) \\
  & Jordan Curve Theorem (11030, 0.0226) & Intermediate Value Theorem (9936, 0.0203) \\
\midrule
\multirow{4}{*}{\ding{175} \texttt{LLaMA-V3-3B-instruct}} 
  & Fermat Last Theorem (30185, 0.0704) & Sylvester Gallai Theorem (16568, 0.0387) \\
  & Kepler Conjecture (14174, 0.0331) & Poincaré Conjecture (13602, 0.0317) \\
  & Brouwer Fixed Point Theorem (11178, 0.0261) & Euler Identity (10231, 0.0239) \\
  & Lagrange Theorem (9576, 0.0223) & Navier Stokes Equations (9028, 0.0211) \\
\midrule
\multirow{4}{*}{\ding{176} \texttt{Mistral-7B-Instruct-v0.1}} 
  & Pythagorean Theorem (868, 0.0501) & Fermat Last Theorem (665, 0.0384) \\
  & List of Theorems (574, 0.0332) & Uncertainty Principle (486, 0.0281) \\
  & Gödel Incompleteness Theorems (459, 0.0265) & Theorem (279, 0.0161) \\
  & Gauss Bonnet Theorem (279, 0.0161) & Sylow Theorems (236, 0.0136) \\
\midrule
\multirow{4}{*}{\ding{177} \texttt{Qwen2.5-7B-instruct}} 
  & Fundamental Theorem of Calculus (14258, 0.1500) & Fundamental Theorem of Arithmetic (5276, 0.0555) \\
  & Gödel Incompleteness Theorems (3159, 0.0332) & Modularity Theorem (2227, 0.0234) \\
  & Dirichlet Approximation Theorem (2142, 0.0225) & Mean Value Theorem (1754, 0.0185) \\
  & Poincaré Bendixson Theorem (1668, 0.0175) & Law of Cosines (1641, 0.0173) \\
\midrule
\multirow{4}{*}{\ding{178} \texttt{Claude-3.7-Sonnet}} 
  & Brouwer Fixed Point Theorem (5008, 0.0545) & Pythagorean Theorem (4994, 0.0543) \\
  & Fermat Last Theorem (4994, 0.0543) & Gödel Incompleteness Theorems (4994, 0.0543) \\
  & Bayes Theorem (4860, 0.0528) & Gauss Bonnet Theorem (4779, 0.0520) \\
  & Borsuk Ulam Theorem (4643, 0.0505) & Sylow Theorems (4056, 0.0441) \\
\midrule
\multirow{4}{*}{\ding{179} \texttt{DeepSeek-V3}} 
  & Pythagorean Theorem (29999, 0.0526) & Brouwer Fixed Point Theorem (29982, 0.0526) \\
  & Hahn Banach Theorem (29835, 0.0523) & Gödel Incompleteness Theorems (29402, 0.0516) \\
  & Central Limit Theorem (29223, 0.0513) & Fundamental Theorem of Algebra (27156, 0.0476) \\
  & Noether Theorem (27038, 0.0474) & Heine Borel Theorem (26967, 0.0473) \\
\midrule
\multirow{4}{*}{\ding{180} \texttt{Gemini-1.5-flash}} 
  & Pythagorean Theorem (30000, 0.0651) & Prime Number Theorem (29989, 0.0651) \\
  & Central Limit Theorem (29936, 0.0649) & Jordan Curve Theorem (29825, 0.0647) \\
  & Mean Value Theorem (29225, 0.0634) & Law of Cosines (28753, 0.0624) \\
  & Bolzano Weierstrass Theorem (28101, 0.0610) & Intermediate Value Theorem (27045, 0.0587) \\
\bottomrule
\end{tabular}%
 }
\end{table}

\newpage
\begin{table}[th]
\caption{Top-8 Human diseases outputs per model with their counts and fractions.}
\label{tab:model_topk_human_disease_counts}
\centering
\resizebox{\linewidth}{!}{%
\begin{tabular}{@{}l|ll@{}}
\toprule
\textbf{Model} & \multicolumn{2}{c}{\textbf{Top-8 Outputs with Counts and Fractions}} \\
\midrule
\multirow{4}{*}{\ding{172} \texttt{ChatGPT-4o-chat}} 
  & Alzheimer'S Disease (3794, 0.0262) & Parkinson'S Disease (2995, 0.0207) \\
   & Asthma (2966, 0.0205) & Multiple Sclerosis (2941, 0.0203) \\
   & Rheumatoid Arthritis (2919, 0.0201) & Hypertension (2885, 0.0199) \\
   & Crohn'S Disease (2879, 0.0199) & Tuberculosis (2843, 0.0196) \\
\midrule
\multirow{4}{*}{\ding{173} \texttt{ChatGPT-3.5-turbo-chat}} 
   & Alzheimer'S Disease (2969, 0.0221) & Parkinson'S Disease (2888, 0.0215) \\
   & Asthma (2826, 0.0211) & Rheumatoid Arthritis (2793, 0.0208) \\
   & Hypertension (2786, 0.0208) & Breast Cancer (2784, 0.0208) \\
   & Leukemia (2772, 0.0207) & Multiple Sclerosis (2761, 0.0206) \\
\midrule
\multirow{4}{*}{\ding{174} \texttt{LLaMA-V3-70B-instruct}} 
  & Alzheimer'S Disease (2376, 0.0211) & Breast Cancer (2243, 0.0199) \\
   & Cystic Fibrosis (1971, 0.0175) & Asthma (1935, 0.0172) \\
   & Chronic Obstructive Pulmonary Disease (1891, 0.0168) & Parkinson'S Disease (1884, 0.0167) \\
   & Multiple Sclerosis (1861, 0.0165) & Amyotrophic Lateral Sclerosis (1853, 0.0164) \\
\midrule
\multirow{4}{*}{\ding{175} \texttt{LLaMA-V3-3B-instruct}} 
 & Maturity-Onset Diabetes Of The Young Type 1 (3013, 0.0260) & Epilepsy (2481, 0.0214) \\
   & Hypertension (2397, 0.0207) & Hypothyroidism (2278, 0.0196) \\
   & Parkinson'S Disease (2272, 0.0196) & Autism Spectrum Disorder (2187, 0.0189) \\
   & Alzheimer'S Disease (2148, 0.0185) & Multiple Sclerosis (1950, 0.0168) \\
\midrule
\multirow{4}{*}{\ding{176} \texttt{Mistral-7B-Instruct-v0.1}} 
 & Alzheimer'S Disease (1399, 0.0369) & Asthma (1304, 0.0344) \\
   & Parkinson'S Disease (1200, 0.0316) & Diabetes Mellitus (1090, 0.0287) \\
   & Cancer (909, 0.0239) & Hypertension (876, 0.0231) \\
   & Multiple Sclerosis (843, 0.0222) & Cerebrovascular Disease (808, 0.0213) \\
\midrule
\multirow{4}{*}{\ding{177} \texttt{Qwen2.5-7B-instruct}} 
   & Maturity-Onset Diabetes Of The Young Type 1 (3295, 0.0243) & Parkinson'S Disease (2941, 0.0217) \\
   & Rheumatoid Arthritis (2919, 0.0215) & Multiple Sclerosis (2901, 0.0214) \\
   & Asthma (2865, 0.0211) & Alzheimer'S Disease (2842, 0.0210) \\
   & Epilepsy (2649, 0.0195) & Hypertension (2482, 0.0183) \\
\midrule
\multirow{4}{*}{\ding{178} \texttt{Claude-3.7-Sonnet}} 
  & Parkinson'S Disease (1088, 0.0202) & Alzheimer'S Disease (1087, 0.0202) \\
   & Multiple Sclerosis (1086, 0.0202) & Rheumatoid Arthritis (1086, 0.0202) \\
   & Hypertension (1086, 0.0202) & Psoriasis (1086, 0.0202) \\
   & Schizophrenia (1086, 0.0202) & Osteoarthritis (1086, 0.0202) \\
\midrule
\multirow{4}{*}{\ding{179} \texttt{DeepSeek-V3}} 
  & Asthma (2831, 0.0203) & Alzheimer'S Disease (2814, 0.0202) \\
   & Rheumatoid Arthritis (2813, 0.0202) & Cystic Fibrosis (2813, 0.0202) \\
   & Malaria (2813, 0.0202) & Tuberculosis (2813, 0.0202) \\
   & Huntington'S Disease (2813, 0.0202) & Parkinson'S Disease (2813, 0.0202) \\
\midrule
\multirow{4}{*}{\ding{180} \texttt{Gemini-1.5-flash}} 
  & Rheumatoid Arthritis (3002, 0.0202) & Asthma (3001, 0.0202) \\
   & Alzheimer'S Disease (3000, 0.0202) & Parkinson'S Disease (3000, 0.0202) \\
   & Multiple Sclerosis (2999, 0.0202) & Schizophrenia (2994, 0.0201) \\
   & Coronary Artery Disease (2984, 0.0201) & Type 2 Diabetes Mellitus (2979, 0.0200) \\
\midrule
\end{tabular}%
 }
\end{table}

\begin{table}[th]
\caption{Top-8 Anatomical diseases outputs per model with their counts and fractions.}
\label{tab:model_topk_anatomical_disease_counts}
\centering
\resizebox{\linewidth}{!}{%
\begin{tabular}{@{}l|ll@{}}
\toprule
\textbf{Model} & \multicolumn{2}{c}{\textbf{Top-8 Outputs with Counts and Fractions}} \\
\midrule
\multirow{4}{*}{\ding{172} \texttt{ChatGPT-4o-chat}} 
  & Hepatitis (4874, 0.0691) & Alzheimer'S Disease (3794, 0.0538) \\
   & Parkinson'S Disease (2995, 0.0425) & Asthma (2966, 0.0421) \\
   & Multiple Sclerosis (2941, 0.0417) & Rheumatoid Arthritis (2919, 0.0414) \\
   & Hypertension (2885, 0.0409) & Crohn'S Disease (2879, 0.0408) \\
   \midrule
\multirow{4}{*}{\ding{173} \texttt{ChatGPT-3.5-turbo-chat}} 
  & Hepatitis (3140, 0.0443) & Alzheimer'S Disease (2969, 0.0419) \\
   & Parkinson'S Disease (2888, 0.0407) & Asthma (2826, 0.0399) \\
   & Rheumatoid Arthritis (2793, 0.0394) & Hypertension (2786, 0.0393) \\
   & Multiple Sclerosis (2761, 0.0389) & Osteoporosis (2704, 0.0381) \\
\midrule
\multirow{4}{*}{\ding{174} \texttt{LLaMA-V3-70B-instruct}} 
  & Hepatitis (2572, 0.0421) & Alzheimer'S Disease (2376, 0.0389) \\
   & Asthma (1935, 0.0317) & Frontotemporal Dementia (1925, 0.0315) \\
   & Chronic Obstructive Pulmonary Disease (1891, 0.0310) & Parkinson'S Disease (1884, 0.0308) \\
   & Multiple Sclerosis (1861, 0.0305) & Chronic Kidney Disease (1689, 0.0277) \\
\midrule
\multirow{4}{*}{\ding{175} \texttt{LLaMA-V3-3B-instruct}} 
& Epilepsy (2481, 0.0331) & Hypertension (2397, 0.0320) \\
   & Hypothyroidism (2278, 0.0304) & Parkinson'S Disease (2272, 0.0303) \\
   & Alzheimer'S Disease (2148, 0.0287) & Multiple Sclerosis (1950, 0.0260) \\
   & Osteoarthritis (1866, 0.0249) & Rheumatoid Arthritis (1855, 0.0248) \\
\midrule
\multirow{4}{*}{\ding{176} \texttt{Mistral-7B-Instruct-v0.1}} 
 & Alzheimer'S Disease (1399, 0.0608) & Asthma (1304, 0.0567) \\
   & Parkinson'S Disease (1200, 0.0522) & Hypertension (876, 0.0381) \\
   & Multiple Sclerosis (843, 0.0366) & Cerebrovascular Disease (808, 0.0351) \\
   & Heart Disease (627, 0.0273) & Epilepsy (611, 0.0266) \\
\midrule
\multirow{4}{*}{\ding{177} \texttt{Qwen2.5-7B-instruct}} 
   & Parkinson'S Disease (2941, 0.0343) & Rheumatoid Arthritis (2919, 0.0340) \\
   & Multiple Sclerosis (2901, 0.0338) & Asthma (2865, 0.0334) \\
   & Alzheimer'S Disease (2842, 0.0331) & Epilepsy (2649, 0.0309) \\
   & Hypertension (2482, 0.0289) & Hepatitis (2382, 0.0278) \\
\midrule
\multirow{4}{*}{\ding{178} \texttt{Claude-3.7-Sonnet}} 
  & Hepatitis (1671, 0.0512) & Parkinson'S Disease (1088, 0.0334) \\
   & Alzheimer'S Disease (1087, 0.0333) & Osteoarthritis (1086, 0.0333) \\
   & Epilepsy (1086, 0.0333) & Multiple Sclerosis (1086, 0.0333) \\
   & Psoriasis (1086, 0.0333) & Asthma (1086, 0.0333) \\
\midrule
\multirow{4}{*}{\ding{179} \texttt{DeepSeek-V3}} 
  & Hepatitis (5526, 0.0715) & Asthma (2831, 0.0366) \\
   & Alzheimer'S Disease (2814, 0.0364) & Parkinson'S Disease (2813, 0.0364) \\
   & Rheumatoid Arthritis (2813, 0.0364) & Huntington'S Disease (2813, 0.0364) \\
   & Multiple Sclerosis (2812, 0.0364) & Frontotemporal Dementia (2807, 0.0363) \\
\midrule
\multirow{4}{*}{\ding{180} \texttt{Gemini-1.5-flash}} 
  & Hepatitis (3361, 0.0390) & Rheumatoid Arthritis (3002, 0.0348) \\
   & Asthma (3001, 0.0348) & Alzheimer'S Disease (3000, 0.0348) \\
   & Parkinson'S Disease (3000, 0.0348) & Multiple Sclerosis (2999, 0.0348) \\
   & Coronary Artery Disease (2984, 0.0346) & Crohn'S Disease (2979, 0.0346) \\
\midrule
\end{tabular}%
 }
\end{table}

\subsection{Traditional Evaluation of Biomedical Information Retrieval}
\label{appen:traditional-IR}

We report the results of traditional evaluation metrics for Application 2 in this subsection. In our setup, each biomedical question is paired with a set of ground-truth documents, each annotated with expert-curated MeSH keywords. These annotations serve as a concrete proxy for the biomedical knowledge a model retrieves during document search or question answering. Standard IR metrics—such as precision, recall, F1 score, and mean reciprocal rank (MRR)—evaluate model performance based on how accurately the retrieved documents match these ground-truth references. In the following, we provide a detailed analysis of the results under these classic IR metrics.

\begin{table}[H]
\vspace{-1em}
\caption{Performance of traditional IR metrics on the BioASQ dataset for the document retrieval task.}
\label{tab:biomed_ir_traditional_doc_retrieval}
\centering
\resizebox{\linewidth}{!}{%
\begin{tabular}{l|ccc|ccc}
\toprule
\textbf{Model} & \textbf{Doc Precision} & \textbf{Doc Recall} &   \textbf{Doc F1} & \textbf{Snip Precision} & \textbf{Snip Recall} & \textbf{Snip F1} \\

\midrule
\ding{172}  \texttt{ChatGPT-4o-chat}        &       16.42\% &    22.22\% &  14.93\% &         4.46\% &      2.77\% &  2.57\% \\
\ding{173} $\texttt{ChatGPT-3.5-turbo-chat}$ &       17.20\% &    20.32\% &  14.36\% &         4.68\% &      2.81\% &  2.72\% \\
\ding{174} $\texttt{LLaMa-V3-70b-instruct}$  &       15.93\% &    20.88\% &  14.08\% &         4.67\% &      \textbf{3.29\%} &  \textbf{2.94\%} \\
\ding{175} $\texttt{LLaMa-V3-3b-instruct}$   &        0.48\% &     1.09\% &   0.56\% &         0.06\% &      0.08\% &  0.06\% \\
\ding{176} $\texttt{Mistral-7B-instruct-V0.1}$ &        9.68\% &    12.69\% &   8.22\% &         2.70\% &      1.54\% &  1.33\% \\
\ding{177} $\texttt{Qwen2.5-7B-instruct}$    &       11.68\% &    13.55\% &   9.21\% &         3.23\% &      1.29\% &  1.46\% \\
\ding{178} $\texttt{Claude-3.7-Sonnet}$      &       15.82\% &    \textbf{23.99\%} &  15.91\% &         4.15\% &      2.68\% &  2.73\% \\
\ding{179} $\texttt{DeepSeek-V3}$            &       \textbf{19.40\%} &    23.03\% &  \textbf{16.76\%} &         \textbf{4.91\%} &      2.76\% &  2.76\% \\
\ding{180} $\texttt{Gemini-1.5-flash}$       &       16.10\% &    21.65\% &  14.36\% &         4.40\% &      2.73\% &  2.62\% \\
\bottomrule
\end{tabular}
}
\vspace{-1em}
\end{table}

\paragraph{Subtask 1: Document Retrieval.} Table~\ref{tab:biomed_ir_traditional_doc_retrieval} presents classic IR metrics (precision, recall, F1) at the document and snippet levels.
According to traditional IR metrics, \texttt{DeepSeek-V3} achieves the highest document-level F1 score (16.76\%), followed closely by \texttt{Claude-3.7-Sonnet} (15.91\%), \texttt{ChatGPT-4o-chat} (14.93\%), and \texttt{ChatGPT-3.5-turbo-chat} (14.36\%). Snippet-level metrics are uniformly low across all models, with \texttt{LLaMA-V3-70B-instruct} (2.94\%) and \texttt{DeepSeek-V3} (2.76\%) performing best. These results suggest that while LLMs are somewhat effective at retrieving relevant documents, they struggle with extracting precise snippet-level answers. Notably, while \texttt{DeepSeek-V3} ranks highest under traditional metrics, it also has the highest estimated number of \emph{unseen} MeSH keywords in our pipeline—demonstrating strong potential for retrieving latent biomedical knowledge. However, if we account for both observed and estimated unseen knowledge, \texttt{ChatGPT-3.5-turbo-chat} emerges as the best performer, with the highest total knowledge count ($N_{\mathrm{tot}} = 10367$ in Table \ref{tab:info-and-diversity}). This ranking shift illustrates how our estimation pipeline reveals model capabilities not captured by conventional evaluation—highlighting output diversity and latent retrieval capacity rather than solely surface-level accuracy.

\begin{table}[H]
\vspace{-1em}
\caption{Performance of traditional IR metrics on the BioASQ dataset for the QA task.}
\label{tab:biomed_ir_traditional_summarization}
\centering
\resizebox{\linewidth}{!}{%
\begin{tabular}{l|c|c|cc|ccc}
\toprule
\textbf{Model} &    \textbf{ROUGE} & \textbf{Yes/No Acc.} & \textbf{Factoid Strict Acc.} & \textbf{Factoid Lenient Acc.} &  \textbf{List F1} & \textbf{List Prec.} & \textbf{List Rec.} \\
\midrule
\ding{172} $\texttt{ChatGPT-4o-chat}$          &  17.41\% &   83.33\% &             3.53\% &              4.71\% &  \textbf{31.42\%} &   \textbf{36.61\%} &  \textbf{30.53\%} \\
\ding{173} $\texttt{ChatGPT-3.5-turbo-chat}$   &  \textbf{20.15\%} &   88.24\% &             1.18\% &              1.18\% &   7.02\% &    8.32\% &   7.02\% \\
\ding{174} $\texttt{LLaMa-V3-70b-instruct}$    &  19.13\% &   91.18\% &             3.53\% &              3.53\% &   1.70\% &    1.92\% &   1.65\% \\
\ding{175} $\texttt{LLaMa-V3-3b-instruct}$     &   8.21\% &   88.24\% &             3.53\% &              3.53\% &   4.04\% &    4.92\% &   3.99\% \\
\ding{176} $\texttt{Mistral-7B-instruct-V0.1}$ &  14.63\% &   27.45\% &             0.00\% &              0.00\% &   4.04\% &    4.83\% &   3.72\% \\
\ding{177} $\texttt{Qwen2.5-7B-instruct}$      &  11.95\% &   87.25\% &             1.18\% &              1.18\% &   0.57\% &    0.52\% &   0.65\% \\
\ding{178} $\texttt{Claude-3.7-Sonnet}$        &  16.86\% &   85.29\% &             \textbf{8.24\%} &             \textbf{10.59\%} &  16.67\% &   17.99\% &  16.46\% \\
\ding{179} $\texttt{DeepSeek-V3}$              &  19.42\% &   \textbf{92.16\%} &             1.18\% &              1.18\% &  18.58\% &   22.52\% &  18.34\% \\
\ding{180} $\texttt{Gemini-1.5-flash}$         &  17.03\% &   \textbf{92.16\%} &             0.00\% &              0.00\% &  18.40\% &   21.89\% &  18.65\% \\
\bottomrule
\end{tabular}
}
\vspace{-1em}
\end{table}

\paragraph{Subtask 2: Question Answering (QA).} 

Table~\ref{tab:biomed_ir_traditional_summarization} reports standard evaluation metrics for biomedical question answering, including ROUGE, accuracy on yes/no and factoid questions, and F1 for list-type answers. \texttt{DeepSeek-V3}, \texttt{Gemini-1.5-flash}, and \texttt{Claude-3.7-Sonnet} all demonstrate strong performance across several metrics. For example, \texttt{DeepSeek-V3} and \texttt{Gemini-1.5-flash} achieve high yes/no accuracy (92.16\%) and list F1 scores above 18\%, while \texttt{Claude-3.7-Sonnet} achieves the best factoid performance (10.59\% lenient accuracy) and strong list-level precision. These results reflect the models' ability to generate correct and structured answers when evaluated against expert annotations. However, our pipeline uncovers a different aspect of model capability. While \texttt{ChatGPT-4o-chat} only ranks moderately under traditional metrics, it demonstrates the highest estimated total knowledge ($N_{\mathrm{tot}} = 19965$), followed by \texttt{DeepSeek-V3} and \texttt{Claude-3.7-Sonnet} (Table~\ref{tab:info-and-diversity}). This suggests that \texttt{ChatGPT-4o-chat} may encode substantially more answer-relevant MeSH knowledge than is directly observed—revealing latent retrieval potential beyond what conventional metrics capture. Once again, our estimator provides a more complete picture of what LLMs could retrieve, even when their immediate outputs fall short under surface-level evaluation.

\end{document}